\documentclass{article}
\usepackage{PRIMEarxiv}
\usepackage[utf8]{inputenc}
\usepackage[T1]{fontenc}
\usepackage{amsmath}
\usepackage{amssymb}
\usepackage{mathtools}
\usepackage{amsthm}
\usepackage{graphicx}
\usepackage{xcolor}
\usepackage{booktabs}
\usepackage{microtype}
\usepackage{algorithm}
\usepackage{algorithmic}
\usepackage[numbers,sort&compress]{natbib}
\usepackage{hyperref}
\usepackage[capitalize,noabbrev]{cleveref}

\graphicspath{{figs/}}
\providecommand{\ket}[1]{\left|#1\right\rangle}
\providecommand{\bra}[1]{\left\langle#1\right|}
\providecommand{\braket}[2]{\left\langle#1\,\middle|\,#2\right\rangle}

\theoremstyle{plain}
\newtheorem{theorem}{Theorem}[section]
\newtheorem{proposition}[theorem]{Proposition}
\newtheorem{lemma}[theorem]{Lemma}

\theoremstyle{definition}

\theoremstyle{remark}
\newtheorem{remark}[theorem]{Remark}

\title{The Quantum Shortcut:\\ Complex Phase-State Dynamics Reduce the Optimization Steps of Sequence Models}

\author{
Ahmed Nebli\thanks{Corresponding author.}\\
cAI Technology GmbH\\
\texttt{ahmed.nebli@cai-technology.ai}
\And
Hadi Saadatdoorabi\\
cAI Technology GmbH\\
\texttt{hadi.saadatdoorabi@cai-technology.ai}
\And
Christopher Keibel\\
Independent Researcher\\
\texttt{christopher.keibel.90@gmail.com}
\And
Kevin Yam\\
Yam Technology Consulting\\
\texttt{kevin@yam-consulting.com}
}
\date{}

\begin{document}
\maketitle

\begin{abstract}
{\color{black}Sequence models are conventionally distinguished by their backbone, the mechanism that routes information across positions, such as attention or recurrence. This paper varies a choice that is prior to the backbone and shared by nearly all current models: the \emph{substrate}, the number system in which the hidden state is represented together with the form of the map from state to prediction. The prevailing substrate is a real-valued state with an affine--softmax readout; we study a complex-valued alternative drawn from the mathematics of quantum theory, in which information is carried by the phases of the state and scores are quadratic Born forms. Prior work proved an idealized version of this substrate representationally stronger than any real model with a linear readout; we ask whether it also trains faster. Relaxing the two properties that block deployment, exact unitarity and the Born vocabulary readout, we instantiate it in the Mamba state-space model and an attention-based Transformer. At 253M parameters, matched to within $0.02\%$ and trained under one fixed protocol on three byte-level corpora, the complex models reach every measured validation loss in approximately one third (state-space) and one half (attention) of the optimization steps of their real counterparts. The two backbones then diverge. Once the learning-rate warmup ends, the state-space advantage continues to widen, from $0.321$ to $0.354$ bits per character on OpenWebText and from $0.368$ to $0.396$ on FineWeb, which an artifact of the warmup ramp would not do; the attention advantage instead decays toward zero on every corpus, and is therefore an effect of early training. The comparison counts optimization steps at equal tokens per step, and for the attention pair the current kernel overhead cancels the step advantage in wall-clock time. We analyze the two candidate mechanisms and release all code and training logs.}
\end{abstract}

\keywords{sequence modeling \and state-space models \and complex-valued networks \and quantum-inspired methods \and sample efficiency \and optimization}
\color{black}

\section{Introduction}
\label{sec:intro}
Sequence modeling, the task of predicting the next element of a sequence from the elements that precede it, underlies modern language models and much of contemporary machine learning. The design of a sequence model is conventionally described by its backbone, the rule by which information from earlier positions reaches later ones. Transformers aggregate by global self-attention~\citep{vaswani2017attention}; state-space models propagate a fixed-dimensional linear recurrence~\citep{gu2021efficiently,gu2023mamba}; long-convolution models mix positions through learned kernels~\citep{poli2023hyena}. The trade-offs among these mechanisms, in expressivity, parallelism, and long-range retention, have been characterized in detail~\citep{tay2022efficient,tay2021long,gu2021efficiently}. A second design choice, made implicitly by nearly all of these models, has been varied only rarely (the exceptions are reviewed in \Cref{sec:related}): the number system in which the hidden state is represented, together with the functional form of the map from state to next-token distribution. In the prevailing design the state is a vector in $\mathbb{R}^{d}$ and the output map is an affine transformation followed by a softmax. We call this pair, the field of the state and the form of the readout, the model's \emph{substrate}, and we treat it as an axis of design distinct from the backbone; whether it is also independent of the backbone is precisely what the experiments test (\Cref{sec:universality}).

The substrate matters because of how forgetting works. A real-valued model forgets by shrinking the entries of its state, and shrinking a real number degrades the information it carries; attenuation of one percent per position, compounded over a thousand positions, leaves less than $0.005\%$ of the signal from the earliest positions. A complex number carries two separable quantities, a magnitude and a phase. The substrate studied here stores information in the phases, implements forgetting as a contraction of the magnitudes alone, and reads predictions from phase differences through a quadratic readout, under which two contributions cancel when their phases oppose; this is destructive interference, the same phenomenon exploited in acoustic noise cancellation. Contraction leaves phases unchanged, so stored information fades in amplitude without being distorted, and {\color{black}interference permits evidence to count against a hypothesis rather than merely attenuate it, with a strength set by the participating magnitudes.} The empirical finding is that models built on this substrate reach every quality level we measure in approximately one third (state-space backbone) to one half (attention backbone) of the optimization steps of real-valued models of identical size trained identically, on corpora spanning 100~MB to 15~GB. The substrate borrows only the mathematics of quantum theory; the models are classical and run on standard accelerators.

The real, affine--softmax substrate carries two documented limitations, and both are independent of the backbone. First, an affine map from a $d$-dimensional state followed by a softmax realizes log-probability matrices of rank at most $d+1$~\citep{yang2018breaking}. When the conditional structure of the data has higher rank, the readout is misspecified no matter how the state is computed. Second, a real linear recurrence with spectral radius $\rho<1$ attenuates a propagated gradient by $\rho^{T}$ over $T$ steps, the classical vanishing-gradient regime~\citep{bengio1994learning,pascanu2013difficulty}. At operating points typical of deep state-space models, say $\rho=0.99$ and $T=1024$, this factor is $\rho^{T}\approx 3.4\times 10^{-5}$: the gradient path to early positions is effectively severed. Neither limitation is removed by changing how the model routes information. The two limitations also indicate where a remedy must act: the readout must escape the rank ceiling, and the recurrence must carry information in a quantity that contraction does not attenuate. The complex substrate, which pairs a quadratic readout with information stored in phase, does both, and \Cref{sec:theory} states each point precisely.

Both limitations bear on training as well as on representation, and they motivate the quantity we measure: the number of optimization steps required to reach a target loss. This number determines the computation expended and, at a fixed batch size, the volume of data consumed in reaching a given quality; a substrate that reduces it improves the training cost and the sample efficiency of every backbone that adopts it. As the results will show, the difference between substrates is established in the earliest steps of training and is largest there. This per-step quantity, rather than the terminal loss alone, is the primary measure of this study.

The motivation for changing the field from $\mathbb{R}$ to $\mathbb{C}$ is a specific structural property: interference. In a real vector space the combination of two state components is monotone in their magnitudes. In a complex vector space it is additionally governed by their relative phase, so that two components of fixed magnitude can reinforce or cancel. A simple example, to which we return throughout the paper, illustrates the consequence for language modeling. A byte-level model that has read the characters \texttt{1 9 4} must maintain two readings of its context: the prefix of a year, as in \texttt{1945}, and the prefix of a price, as in \texttt{19.4}. A real model holds both readings as magnitudes and, once a later byte settles the question, suppresses the losing reading with a learned gate that shrinks its magnitude toward zero. A complex model can instead advance the relative phase of the two readings so that, under a quadratic readout, the losing reading interferes destructively with the incoming evidence while the winning one interferes constructively. The distinction is substantive because cancellation strictly extends attenuation: a gate can only drive a component toward zero, whereas interference allows a component that is still present to count against an outcome. The same enrichment is what renders alternating-current analysis, Fourier optics, and quantum mechanics tractable over the complex field.

The complex substrate was developed in idealized form by \citet{nebli2026quantum}. There the hidden state is a unit-norm vector in $\mathbb{C}^N$; its evolution is generated by a learned Hamiltonian and is exactly unitary, discretized by a Cayley (Crank--Nicolson) map that preserves the state norm for any step size; and the next-token distribution is the Born rule $p_v=|\braket{e_v}{\psi}|^2$, a quadratic form in the state. That work proved a representational separation: a family of phase-disambiguation tasks solved exactly by a complex unitary model of dimension $N$ requires dimension $\Omega(N^2)$ from any real orthogonal model with an affine--softmax readout, {\color{black}because the quadratic readout accesses the $\Theta(N^2)$ pairwise phase correlations of the state's density matrix, which a linear functional of the state cannot resolve; these correlations are $\Theta(N^2)$ measurement directions, not $\Theta(N^2)$ independent state coordinates, since the pure-state variety has real dimension $2N-1$.} The separation is representational: it says what each class can express at its optimum and is silent on how training gets there. Whether the substrate helps a \emph{trained} model, at scale, on natural data, was left open.

Two properties of the idealized substrate obstruct direct deployment, and this paper studies the consequences of relaxing them. Exact unitarity admits no forgetting: a norm-preserving recurrence cannot damp a state component that has ceased to matter, whereas gating has served as the forgetting mechanism of recurrent models since the LSTM~\citep{hochreiter1997long,gers2000learning}, and the ablations of \citet{gu2023mamba} attribute the gains of selective state-space models over their time-invariant predecessors to exactly this input-dependent forgetting. And a Born readout over a vocabulary of size $V$ costs $O(NV)$ per step and ties the state width to $\sqrt{V}$ if its rank advantage is to be realized, which is unfavorable at subword vocabularies. We therefore construct a \emph{deployable} substrate: unitarity is relaxed to a contractive, input-dependent recurrence that keeps the phase of every transition eigenvalue on the unit circle, and the Born rule is applied where it is inexpensive, as the scoring function of a linear-attention kernel, while the vocabulary readout remains a tied softmax. \Cref{sec:substrate} states precisely what each relaxation surrenders and what it retains.

The principal empirical result is that this deployable substrate, at 253M parameters and under a single training protocol, reaches every measured validation loss in approximately one third of the optimization steps of the real Mamba and one half of those of the real Transformer, on three byte-level corpora spanning a $150\times$ range of size. We refer to this reduction as the \emph{quantum shortcut}. {\color{black}It is established early in training: H-Mamba passes 2.5 bits per character near step 78, a level the real Mamba reaches near step 252, and the ratio of steps remains nearly constant as the target descends from 2.5 to 2.0 bits per character. Because the measured targets are crossed during the learning-rate warmup, we examine separately what happens once the warmup ends (\Cref{sec:postwarmup}). The two backbones behave differently there, and the difference is part of the result: the state-space gap continues to widen after the ramp stops, while the attention gap decays toward zero. The acceleration is thus backbone-independent in the early phase, and, on the present evidence, persistent only for the recurrence.} {\color{black}The design of the comparison addresses the most immediate alternative explanations. The members of each pair are matched in parameter count to within $0.02\%$, which rules out raw parameter count as the driver; the training protocol is identical across models and follows the published recipe at this scale~\citep{brown2020language}, which argues against a tuning artifact; the stability of the ratio across the descending range of the loss is inconsistent with a transient advantage of initialization; and the recurrence of the effect across two backbones that share almost no computational structure makes a purely backbone-specific explanation unlikely.} One qualification applies. The fused kernel of the complex attention currently runs at half the throughput of its real counterpart, so for that pair the reduction in steps yields parity, not savings, in wall-clock time, and the state-space comparison lacks an optimized real baseline kernel; the established value of the shortcut is accordingly in optimization steps and data efficiency rather than in wall-clock time (\Cref{sec:throughput}).

This paper makes three contributions. First, we define the deployable complex substrate and characterize its two relaxations, contraction in place of unitarity and Born scoring in place of a Born vocabulary head, stating for each what is surrendered and what is retained (\Cref{sec:substrate}). Second, in the central result of the paper, we establish the quantum shortcut: at matched capacity and under an identical protocol, the substrate reduces the number of steps to every measured target by a factor of approximately three for a state-space backbone and two for an attention backbone, uniformly over three corpora (\Cref{sec:setup,sec:results}; \Cref{fig:hero,fig:steps}; \Cref{tab:learning_speed,tab:absolute}). Third, we account for the effect from both directions: an optimization floor relates the rank of an affine--softmax readout to the loss it can attain, {\color{black}an exact-transport lemma shows that the phase coordinate crosses the contracting recurrence with unit sensitivity, while its observable effect remains scaled by the participating magnitudes (\Cref{sec:theory}),} and per-layer gradient measurements exhibit the predicted behavior in the trained models (\Cref{sec:gradstab}); together these attribute the state-space acceleration to the recurrence rather than to the readout. Complete specifications of the architecture, the parallel-scan and fused-kernel implementations, and all training logs accompany the paper, so that every reported quantity is re-derivable (\Cref{app:repro}). The paper is organized as follows. \Cref{sec:prelim} fixes notation and defines the three objects used throughout, complex states, unitary evolution, and the Born rule; it assumes no familiarity with quantum mechanics. \Cref{sec:wavefn} recalls the idealized substrate and its representational guarantee, and \Cref{sec:related} reviews the six lines of related work. \Cref{sec:substrate} specifies the deployable substrate one relaxation at a time and closes with a self-contained summary of the full construction. \Cref{sec:theory} states the two theoretical results: an optimization floor for rank-limited readouts and an exact-transport lemma for relative phase. \Cref{sec:setup,sec:results} describe the experimental design and the measurements, \Cref{sec:discussion} examines mechanism, cost, and scaling, and \Cref{sec:limitations} states each limitation together with the experiment that would remove it. Readers primarily interested in the empirical result may proceed directly to \Cref{sec:setup,sec:results}.

\begin{figure}[t]
\centering
\includegraphics[width=0.92\linewidth]{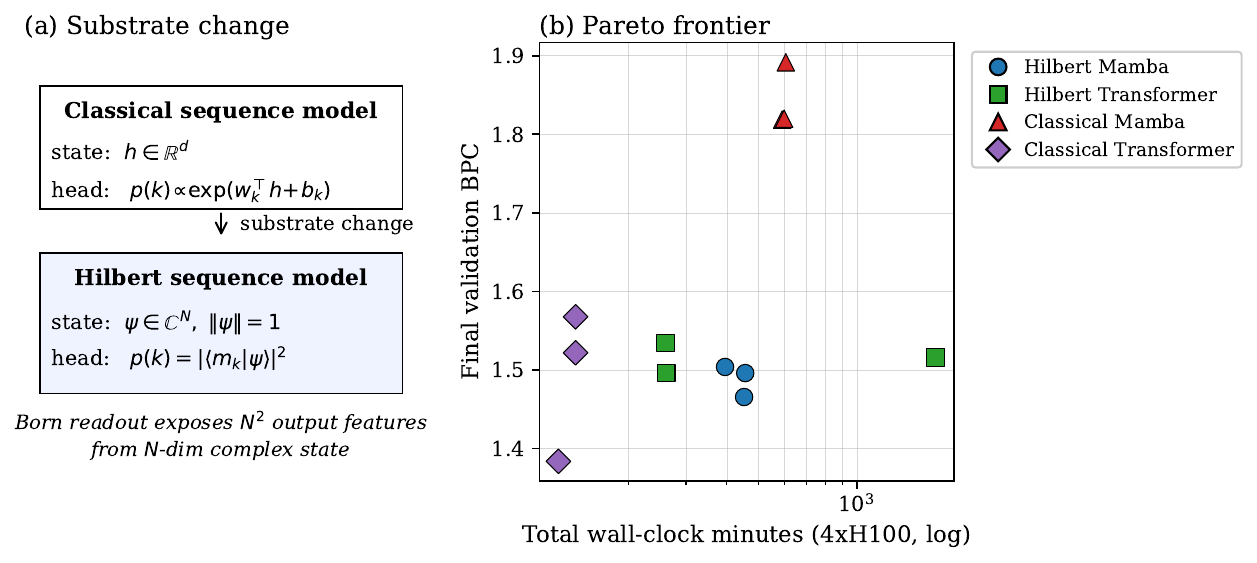}
\caption{\textbf{The substrate substitution and its effect.} Left: the deployable complex substrate replaces the real state and affine--softmax scoring of a classical block with a complex state, a diagonal Cayley phase recurrence, and Born-rule scoring; the rest of the block is unchanged. Right: validation bits per character against optimization step for the four models on three corpora. Within the measured range of targets ($2.5$ to $2.0$ bits per character), each complex variant (solid) reaches every value of the loss in one half to one third of the steps of its real counterpart (dashed), and the separation is established within the first $200$ steps. On enwik8, the smallest corpus, the complex models saturate the data and overfit beyond the budget, and the rotary Transformer baseline finishes below H-Transformer (\Cref{sec:smalldata}).}
\label{fig:hero}
\end{figure}
\section{Preliminaries and Notation}
\label{sec:prelim}
We fix notation and recall the algebraic objects used throughout. The material is standard and is developed at length in \citet{nebli2026quantum}; we restate only what makes the paper self-contained, and we define every object before using it. Readers familiar with Dirac notation and density matrices may proceed to \Cref{tab:correspondence}, which summarizes the correspondence between the idealized substrate of \Cref{sec:wavefn} and the deployable substrate of \Cref{sec:substrate}.

\paragraph{Complex states and the density matrix.}
The state space is the complex Hilbert space $\mathbb{C}^{N}$ with the Hermitian inner product $\braket{\phi}{\psi}=\sum_{k}\bar\phi_k\psi_k$. A state $\ket{\psi}$ is a column vector and $\bra{\psi}$ its conjugate transpose. Each state has a density matrix $\rho=\ket{\psi}\bra{\psi}\in\mathbb{C}^{N\times N}$: a rank-one, Hermitian, positive semidefinite operator with $\operatorname{tr}\rho=\|\psi\|^{2}$. Its diagonal entries $\rho_{kk}=|\psi_k|^{2}$ are per-coordinate occupation magnitudes. Its off-diagonal entries $\rho_{jk}=\bar\psi_j\psi_k$ carry the pairwise relative phases. The off-diagonals are the coordinates that a quadratic functional of the state can read and a linear one cannot, a fact that recurs throughout the paper.

\paragraph{Unitary evolution and the Cayley transform.}
An operator $U\in\mathbb{C}^{N\times N}$ is unitary if $U^\dagger U=I$; it then preserves inner products and norms, the complex analogue of a rotation. The Cayley transform parameterizes the unitary group by skew-Hermitian matrices ($A^\dagger=-A$),
\begin{equation}
U=(I-A)^{-1}(I+A),
\label{eq:cayley-full}
\end{equation}
{\color{black}and is defined and unitary for every skew-Hermitian $A$: the spectrum of $A$ is purely imaginary, so $I-A$ is always invertible (an exclusion condition arises only for the inverse map, which requires $-1$ not to be an eigenvalue of $U$).} For $A=-\tfrac{i}{2}H\,\Delta t$ with $H$ Hermitian, it coincides with the Crank--Nicolson (implicit midpoint) discretization~\citep{crank1947practical} of the Schr\"odinger evolution $\dot{\ket{\psi}}=-iH\ket{\psi}$. Unlike an explicit integrator, the Cayley transform preserves the norm exactly for any step size $\Delta t$; {\color{black}the phase trajectory is second-order accurate in $\Delta t$ (local error $O(\Delta t^{3})$), the standard accuracy of the implicit midpoint rule.} When $A$ is diagonal, the transform acts coordinatewise and maps each imaginary eigenvalue of $A$ onto the unit circle by a M\"obius map.

\paragraph{The Born rule.}
Given measurement vectors $\{\ket{e_v}\}_{v=1}^{V}\subset\mathbb{C}^{N}$, the Born rule~\citep{nielsen2000quantum} assigns outcome $v$ the probability $p_v=|\braket{e_v}{\psi}|^{2}=\operatorname{tr}\!\big(\ket{e_v}\bra{e_v}\,\rho\big)$. As a function of the state this is a real quadratic form; as a function of $\rho$ it is linear. That a linear functional of $\rho$ is a quadratic functional of $\ket{\psi}$ is the observation on which the separation of \Cref{sec:wavefn} rests. Throughout, $\sigma(\cdot)$ is the logistic function, $\odot$ the Hadamard product, and $\operatorname{tr}$ the trace; $T$ is the sequence length, $N$ the per-head state width, $H$ the number of heads, and $V$ the vocabulary size. These three objects, the complex state, the unitary map, and the quadratic measurement, are the entire mathematical inventory of the paper.

\begin{table}[t]
\centering
\caption{Correspondence between the idealized wave-function substrate of \citet{nebli2026quantum}, the deployable substrate of this paper, and their roles in the model.}
\label{tab:correspondence}
\small
\begin{tabular}{@{}lll@{}}
\toprule
Idealized substrate & Deployable substrate & Role \\
\midrule
Unit-norm state $\ket{\psi}\in\mathbb{C}^{N}$ & Complex state, norm free & Hidden state \\
Unitary $U_t=\mathrm{Cay}(-\tfrac{i}{2}H_t\Delta t)$ & Diagonal Cayley phase $\times$ gate & State transition \\
Norm preservation $\|\psi_t\|\equiv 1$ & Contraction by $\textstyle\prod_t\alpha_t$ & Forgetting and stability \\
Born vocabulary readout $|\braket{e_v}{\psi}|^{2}$ & Born attention score; tied softmax head & Output and scoring \\
Free Hamiltonian $H_0$ & Base frequencies $\omega_k$ & Baseline timescales \\
Interaction $H_{\mathrm{int}}(x_t)$ & Input-dependent $\delta_{t,k},\,\Delta t_{t,k}$ & Input coupling \\
\bottomrule
\end{tabular}
\end{table}

\section{The Wave-Function Substrate and Its Idealization}
\label{sec:wavefn}
\Cref{sec:intro} claimed that storing information in phase and reading it out quadratically yields a training advantage. Both ingredients were first assembled, in exact form, in the idealized model of \citet{nebli2026quantum}; the deployable substrate of this paper is a controlled relaxation of that model, and its relaxations are best understood against the model they relax. This section states the idealized substrate and the guarantee it enjoys, in the notation of \Cref{sec:prelim}, and refers to the cited work for constructions and proofs.

The idealized substrate couples three components. The state is a unit-norm vector $\ket{\psi_t}\in\mathbb{C}^{N}$. Its evolution is $\ket{\psi_{t+1}}=U_t\ket{\psi_t}$, where $U_t$ is generated by a learned Hermitian Hamiltonian $H_t=H_0+H_{\mathrm{int}}(x_t)$: a diagonal free term $H_0$ fixes a spectrum of baseline oscillation frequencies, and an input-dependent interaction $H_{\mathrm{int}}(x_t)$ couples the input to the dynamics. The Cayley discretization \eqref{eq:cayley-full} makes each $U_t$ exactly unitary, so $\|\psi_t\|\equiv 1$ at every step and for every sequence length. The readout is the Born rule over a learned measurement basis. Because $H_t$ is Hermitian by construction, norm preservation needs no penalties or projections, and the backward operator inherits the same unit modulus: gradients are neither amplified nor attenuated as they traverse the recurrence.

What distinguishes the idealized substrate is not norm preservation alone but the interaction between the complex state and the quadratic readout. {\color{black}Since $p_v=\operatorname{tr}(\ket{e_v}\bra{e_v}\rho)$ is linear in $\rho$, the family of Born readouts spans the full $N^{2}$-dimensional real vector space of Hermitian measurement functionals, whereas linear readouts of the state span only a $2N$-real-dimensional family. Two precisions apply: the pure-state image $\{\ket{\psi}\bra{\psi}\}$ is a variety of real dimension $2N-1$, so the $N^{2}$ entries of $\rho$ are algebraically dependent as features, and the count $N^{2}$ refers to independent measurement directions, not independent state coordinates.} \citet{nebli2026quantum} convert this dimensional gap into a lower bound. A family of phase-disambiguation tasks realized exactly by a complex unitary model of dimension $N$ requires state dimension $\Omega(N^{2})$ from any real model with orthogonal dynamics and an affine--softmax readout; the real model must spend an explicit state coordinate on each pairwise phase correlation. The bound is representational. It concerns what each class expresses at its optimum and says nothing about how the optimum is reached{\color{black}; moreover, it relies on assumptions, exact unitarity, a Born vocabulary readout, and orthogonal real dynamics, that the deployable model of this paper does not satisfy. We therefore use it strictly as motivation, and no part of the guarantee transfers to the architectures trained here.} This paper addresses the training side, and \Cref{sec:theory} supplies the corresponding optimization statement.

The mechanism the guarantee formalizes is interference, and the two-reading example of \Cref{sec:intro} is an instance of it; the word-level analogue, disambiguating an occurrence of ``bank'' by a later token, opens \citet{nebli2026quantum}. Two admissible readings of a context are encoded in two components of the state, and a later token is compatible with one reading but not the other. A real model suppresses the losing reading with a gate, a learned multiplicative mask that shrinks the disfavored component toward zero. The complex substrate suppresses it through the evolution itself: the recurrence advances the relative phase of the two components so that, under the quadratic readout, the losing reading interferes destructively with the incoming evidence while the winning one interferes constructively. Attenuation can only shrink a component, whereas interference can reverse its sign at the readout; the latter is a strictly larger repertoire of operations, available without a dedicated gating module.

With the idealized substrate in place, two questions arise: where this construction stands relative to prior work on complex, unitary, and quantum-inspired models, and what must change before it can train inside a production backbone. \Cref{sec:related} answers the first question, \Cref{sec:substrate} the second.

\section{Related Work}
\label{sec:related}
The substrate assembled in \Cref{sec:substrate} borrows from several literatures: its recurrence and scan come from the state-space line, its parameterization from the unitary line, its second host architecture from linear attention, its integrator from structure-preserving numerics, and its readout from quantum-inspired modeling. This section reviews each in enough detail to make the borrowings, and the departures, exact.

\subsection{Structured state-space models}
The state-space line grew out of the memory problem for recurrent networks. HiPPO~\citep{gu2020hippo} derived state matrices from online polynomial approximation of the input history, giving fixed-dimensional recurrences with provable memory of the past. S4~\citep{gu2021efficiently} made these systems trainable at depth by parameterizing a stable linear time-invariant system and evaluating its sequence map as a convolution, setting the standard on long-range benchmarks such as the Long Range Arena~\citep{tay2021long}. Two simplifications followed. DSS~\citep{gupta2022diagonal} and S4D~\citep{gu2022parameterization} showed that diagonal state matrices recover the performance of the full structured system once the initialization is chosen correctly, and S5~\citep{smith2023simplified} showed that a diagonal recurrence evaluated by an associative parallel scan suffices, at a fraction of the implementation complexity. The linear recurrent unit~\citep{orvieto2023resurrecting} completed this reduction by asking which ingredients are essential: it parameterizes complex-diagonal eigenvalues by log-magnitude and phase, initializes them on a ring close to the unit circle, and identifies the input normalization that makes deep stacks of bare linear recurrences trainable.

A second thread carried the line to language. H3~\citep{fu2023hungry} diagnosed the gap between state-space models and attention on associative-recall tasks and closed part of it with a hybrid design; Hyena~\citep{poli2023hyena} replaced attention with long implicit convolutions; Mamba~\citep{gu2023mamba} made the recurrence parameters functions of the current token, so that the model can selectively write, retain, or discard, and paired this selective recurrence with a hardware-aware scan, reporting language modeling competitive with Transformers at linear-time inference. Mamba-2~\citep{dao2024mamba2} exhibited the selective recurrence as a form of structured masked attention, unifying the two families.

The recurrence \eqref{eq:rec} is a member of this family: a diagonal, input-dependent linear recurrence, evaluated by the same associative scan (\Cref{sec:scan}), hosted in Mamba's own block (\Cref{sec:realizations}). The departure is confined to a single design decision. Every model above keeps its eigenvalues strictly inside the unit disc, so stability, forgetting, and memory all hang on one quantity, $|\lambda|<1$: the recurrence forgets by attenuating, and the same attenuation erodes what it remembers. The tension is visible within the line itself, in the LRU's finding that eigenvalues must crowd the unit circle for long memory yet stay inside it for stability. The substrate studied here dissolves the tension by splitting the roles: the Cayley map holds the phase of every eigenvalue exactly on the circle, a separate sigmoid gate supplies the decay, and the information rides in the phase, which the decay never touches. The remaining difference is the readout: the models above read their state linearly, which leaves relative phases unobservable, whereas the Born score of \Cref{sec:readout} reads them directly. H-Mamba is Mamba with this substitution and nothing else.

\subsection{Complex-valued and unitary networks}
Complex-valued networks predate deep learning; the signal-processing and neural-network literature developed complex arithmetic, activation functions, and training rules for them over several decades~\citep{mandic2009complex,hirose2012complex}. Within deep learning, the unitary RNN~\citep{arjovsky2016unitary} introduced complex recurrences to solve a conditioning problem: constraining the recurrent operator to a product of structured unitary factors (diagonal phases, Householder reflections, Fourier transforms) fixes the spectral norm of the recurrent Jacobian at one and eliminates exploding and vanishing gradients by construction. EUNN~\citep{jing2017tunable} made the parameterization tunable in capacity, the full-capacity uRNN~\citep{wisdom2016full} optimized over the entire unitary group, scoRNN~\citep{helfrich2018orthogonal} obtained orthogonal recurrent matrices through a scaled Cayley transform, and \citet{lezcano2019cheap} gave exact, inexpensive trivializations of the orthogonal and unitary groups; geometric numerical integration~\citep{hairer2006geometric} supplies the discretizations that keep learned dynamics on the group. Deep complex networks~\citep{trabelsi2018deep} extended convolution, batch normalization, and initialization to $\mathbb{C}$, complex gated RNNs~\citep{wolter2018complex} combined complex arithmetic with gating, and complex attention has been developed for inherently complex-valued data such as audio and radar~\citep{yang2020complex}; tensor-network models~\citep{stoudenmire2016supervised} apply the adjacent multilinear algebra to supervised learning.

This line settled two questions on which our construction depends: complex arithmetic trains stably at depth, and unitarity removes gradient pathologies. It also left two questions open, and they define what we do here. First, exact unitarity forbids forgetting, and the models of this line were accordingly validated on memory benchmarks, copying, adding, and pixel-by-pixel classification, rather than on competitive language modeling; the constraint that stabilizes gradients also freezes stale state. The substrate here keeps the unit-modulus phase but moves the modulus into a separate learned gate, so that forgetting returns without sacrificing the transport property, which \Cref{lem:phase} states exactly. Second, every model in this line reads its state through a linear map, {\color{black}so the $\Theta(N^2)$ pairwise phase correlations a complex state carries are not resolved at the output;} the complex structure works as a conditioning device, not as an information channel. Even the Cayley transform, where it appears~\citep{helfrich2018orthogonal,lezcano2019cheap}, serves to keep a real recurrent matrix orthogonal; here the same map is applied coordinatewise to keep individual phases on the circle while the Born score turns those phases into predictions.

\subsection{Linear attention and relative position encodings}
Linear attention~\citep{katharopoulos2020transformers} replaces the softmax kernel with a positive feature map, which factorizes the attention matrix and turns autoregressive attention into a linear recurrence with $O(T)$ inference; Performers~\citep{choromanski2021rethinking} construct random features that approximate the softmax kernel without bias. A subsequent generation attached forgetting to this recurrence: RetNet~\citep{sun2023retentive} with a fixed exponential decay, RWKV~\citep{peng2023rwkv} with an attention-free mixing rule in the same spirit, and gated linear attention~\citep{yang2024gla} with data-dependent gates and a hardware-efficient training algorithm. In parallel, rotary position embeddings~\citep{su2021roformer} became the standard way to encode relative position: queries and keys are rotated in two-dimensional subspaces through angles proportional to absolute position, so that their inner product depends on position only through differences. A rotation of a two-dimensional subspace is multiplication by a unit-modulus complex number, so every model with rotary embeddings already stores position in phase.

These two threads each contain half of our attention realization. The gated linear-attention thread maintains a recurrent state with a learned decay gate, exactly the role the gate $\alpha$ plays in \eqref{eq:rec}, but scores with nonnegative feature maps, which can only accumulate evidence. The rotary thread stores information in phase, but with frequencies fixed by schedule, phases that encode position alone, and a score that is, in complex notation, the real part of a complex inner product, a signed but linear functional of the query--key overlaps. Neither thread scores with a quadratic form, so in neither can one overlap cancel another. The realization of \Cref{sec:readout} combines the halves and adds the missing nonlinearity: learned frequencies whose phases participate in the computation, a decay gate for forgetting, and the squared-modulus score $|\braket{q_i}{k_j}|^{2}$, {\color{black}whose cross terms make interference available in the attention weights themselves, at a strength scaled by the participating magnitudes.}

\subsection{Hamiltonian, oscillatory, and energy-preserving networks}
A fourth line builds conservation and oscillation into network dynamics. Hamiltonian neural networks~\citep{greydanus2019hamiltonian} learn a scalar energy whose induced flow the model follows, conserving the learned energy along trajectories; symplectic recurrent networks~\citep{chen2020symplectic} integrate learned separable Hamiltonians with leapfrog integrators and show that the integrator, not the Hamiltonian form alone, secures stable learned dynamics. On the recurrent side, \citet{haber2017stable} pose forward propagation as a well-posed differential equation whose stability is controlled through the spectrum of the transition operator; AntisymmetricRNN~\citep{chang2019antisymmetric} instantiates this with an antisymmetric weight matrix, whose purely imaginary spectrum is the real-valued analogue of unitary evolution; and coRNN~\citep{rusch2021coupled} builds the recurrence from a network of controlled, damped oscillators, with proven bounds on gradient growth and strong results on long-dependency benchmarks.

The oscillatory members of this line come closest to the substrate studied here: coRNN, like the recurrence here, uses oscillation as the vehicle of memory, and our per-channel frequencies $\omega_k$ have their counterpart in its oscillator frequencies. The difference is what a real oscillator can carry. A real oscillation stores its state in a position--velocity pair whose amplitude decays under damping, so information and decay share coordinates, and a linear readout of those coordinates sees amplitudes, not phase relations. A complex coordinate is the canonical pair packaged as one number: the gate damps its modulus while the argument advances undisturbed, and the quadratic readout of \Cref{sec:readout} converts the arguments' differences into probabilities. Conservation likewise changes role. In this line the conserved quantity is a real energy or norm, conserved as physical fidelity or as a stability device; here the conserved quantity is the phase alone, the norm is deliberately not conserved so that the model can forget, and the conserved quantity is what the model predicts with.

\subsection{Continuous-time and quantum-inspired models}
Neural ordinary differential equations~\citep{chen2018neural} formalize deep networks as discretizations of continuous-time systems and make explicit that a learned dynamical system is shaped jointly by its vector field and its integrator. That lesson fixes a choice made here: the Crank--Nicolson discretization~\citep{crank1947practical} of \Cref{sec:prelim} preserves the unitary structure of the continuous flow exactly, at every step size, where an explicit integrator would preserve it only to first order.

Quantum-inspired modeling of language predates deep learning as well. Quantum language models for information retrieval~\citep{sordoni2013modeling} represented queries and documents as density matrices and scored them with Born-type traces, capturing term dependencies that bag-of-words models miss; this is, to our knowledge, the earliest use in language processing of the density-matrix lift that \Cref{sec:wavefn} exploits. Quantum cognition~\citep{busemeyer2012quantum} documented systematic violations of classical probability in human judgment that quantum probability describes, motivating interference as an inductive bias for ambiguity; and a separate engineering line develops variational circuits for quantum hardware~\citep{preskill2018quantum}, which shares formalism but not purpose with classical quantum-inspired models. The framework this paper builds on~\citep{nebli2026quantum} belongs to the classical line and assembled the full package for sequence modeling: a complex state, norm-preserving Hamiltonian evolution, a Born-rule readout, the $\Omega(N^{2})$ separation theorem recalled in \Cref{sec:wavefn}, and conserved probability currents for tracing information flow. What that work left open, and what we supply here, is the training side: whether the substrate, once relaxed enough to live inside production backbones, changes how fast such models learn. Everything from \Cref{sec:substrate} onward addresses that question at 253M parameters.

\subsection{The softmax bottleneck}
\citet{yang2018breaking} proved that an affine--softmax head of width $d$ produces log-probability matrices of rank at most $d+1$, so a model whose true conditional structure has higher rank is misspecified no matter how its backbone computes the state; their mixture of softmaxes restores rank by combining $K$ softmax components at a $K$-fold readout cost. Follow-up work varied the fix while keeping the diagnosis: Sigsoftmax~\citep{kanai2018sigsoftmax} modifies the output nonlinearity, and Mixtape~\citep{yang2019mixtape} replaces the mixture with an efficient gating construction. All of these remedies operate on a real state and raise rank by complicating the output function.

In the vocabulary of this paper, the bottleneck is a substrate property, the first of the two limitations with which \Cref{sec:intro} opened, and the line bears on this paper twice. {\color{black}\Cref{prop:gap} sharpens the diagnosis from expressibility to optimization: where the head is narrower than the rank of the target, no training trajectory through an affine--softmax head reaches the optimum, by a margin the tail spectrum quantifies. That condition is not met at the byte vocabulary of our experiments (\Cref{sec:floor}), so the sharpened statement is offered as general motivation.} And the mixture of softmaxes supplies the natural control for our readout mechanism: it raises rank without complex numbers, so if it closed the gap we measure on the state-space backbone, the readout rather than the recurrence would be implicated; \Cref{sec:mechanism} places this control alongside the factorial ablation. The Born readout of the idealized substrate raises the accessible rank to order $N^{2}$ by changing the functional form of the readout instead of mixing copies of it.

Across these six literatures, the individual ingredients of the deployable substrate all appear: diagonal scans, Cayley parameterizations, decay gates, phase-encoded structure, density-matrix readouts, and rank-raising output layers. Their combination, a contractive recurrence whose eigenvalue phases are exactly unit-modulus, feeding a quadratic score, at parameters matched to a production baseline, does not, and neither does a measurement of what that combination changes about optimization. Those are the subjects of the next four sections.

\section{The Deployable Complex Substrate}
\label{sec:substrate}
Embedding the idealized substrate in a contemporary backbone requires relaxing its two most restrictive properties: exact unitarity and the Born vocabulary readout. Each is in tension with a mechanism that trained sequence models rely on. This section specifies the relaxed substrate and states, for each relaxation, what is surrendered and what is retained. The substrate is defined per head; a model uses $H=16$ heads of width $N=16$, and the two architectural realizations that host it are given in \Cref{sec:realizations}. The subsections follow the order of a forward pass, from the state through its evolution to the readout, and \Cref{sec:cost} closes with a summary of the full specification.

\subsection{State}
\label{sec:state}
In place of the real hidden vector $\mathbf{h}_t\in\mathbb{R}^{d}$, each head carries a complex vector $\ket{\psi_t}\in\mathbb{C}^{N}$, stored as real and imaginary parts: $2N$ real numbers per head. The state is initialized from a learned complex prior normalized to unit modulus, and the choice is deliberate. Were the state initialized on the real axis, every off-diagonal entry $\rho_{jk}=\bar\psi_j\psi_k$ of the density matrix would be real at the first step. The interference cross terms would vanish identically, and early optimization would be spent rotating the state off the real axis before any phase-dependent computation could begin. A complex prior places nontrivial phase in the state from the first token. The idealized substrate keeps $\|\psi_t\|=1$ at every step; the deployable substrate normalizes only the prior and lets the norm vary thereafter. That is the first relaxation, and it is the subject of the next subsection.

\subsection{Evolution and the relaxation of unitarity}
\label{sec:evolution}
The state evolves by a diagonal, input-dependent linear recurrence,
\begin{equation}
\ket{\psi_{t+1}}=a_t\odot\ket{\psi_t}+b_t,\qquad a_t=\alpha_t\odot\lambda_t\in\mathbb{C}^{N},
\label{eq:rec}
\end{equation}
where $b_t\in\mathbb{C}^{N}$ is a complex input projection and each transition coefficient $a_{t,k}$ is a unit-modulus phase $\lambda_{t,k}$ times a real gate $\alpha_{t,k}\in(0,1)$. The phase is the diagonal Cayley transform of a real parameter $\phi_{t,k}$,
\begin{equation}
\lambda_{t,k}=\frac{1+i\phi_{t,k}}{1-i\phi_{t,k}},\qquad |\lambda_{t,k}|=1,\qquad \arg\lambda_{t,k}=2\arctan\phi_{t,k},
\label{eq:cayley}
\end{equation}
so without the gate the transition spectrum lies on the unit circle. The phase parameter is assembled from an input-dependent step size $\Delta t_{t,k}=\operatorname{softplus}(\cdot)$, an input-dependent shift $\delta_{t,k}$, and a per-channel base frequency $\omega_k$, as $\phi_{t,k}=\tfrac{1}{2}\Delta t_{t,k}(\delta_{t,k}+\omega_k)$. This reproduces, coordinatewise, the free-plus-interaction decomposition of the idealized Hamiltonian: $\omega_k$ supplies fixed baseline frequencies, $\delta_{t,k}$ the input-dependent modulation.

The exactness of the parameterization matters at this sequence length, and it is the reason we prefer the Cayley map to approximate norm control by spectral normalization, penalty terms, or projection steps. The Cayley map holds $|\lambda_{t,k}|=1$ at machine precision with two arithmetic operations per coordinate and nothing to tune. A scheme that controls the modulus only approximately, to within $10^{-3}$ say, accumulates a modulus drift of $0.999^{1024}\approx 0.36$ over the sequence, losing $64\%$ of the transported signal to the attenuation the phase channel is designed to remove. Approximate orthogonality suffices as a stability device; {\color{black}exact unit modulus is what makes the phase transport of \Cref{lem:phase} distortion-free.}

The gate $\alpha_{t,k}=\sigma(\cdot)$ constitutes the relaxation of unitarity. It sets the modulus of the transition coefficient, $|a_{t,k}|=\alpha_{t,k}$, and hence the rate at which coordinate $k$ forgets its history: a value near one preserves the coordinate, a value near zero discards it. This restores the selective, input-dependent forgetting on which gated state-space models depend~\citep{gu2023mamba} and which an exactly unitary recurrence cannot express; in the example of \Cref{sec:intro}, it is the gate that allows the model to discard both readings once the passage has moved on. The cost of the relaxation is that the recurrence becomes contractive, and the exact gradient guarantee of the idealized substrate is forfeited. Two properties are retained, and they are the reason the relaxation is admissible. {\color{black}The phase $\lambda_{t,k}$ remains on the unit circle, so the argument of a coordinate is transported without distortion even while its magnitude decays; and the sign of interference is unaffected, because constructive against destructive is decided by relative phase, although the strength of the cross term scales with the participating magnitudes.} The residual gradient benefit this predicts is formalized in \Cref{lem:phase} and measured in \Cref{sec:gradstab}.

The contrast that motivates the phase channel is quantitative. A real diagonal recurrence with per-coordinate factor $\rho<1$ transports a signal across $T$ steps with gain $\rho^{T}$; at $\rho=0.99$ and $T=1024$ this gain is $3.4\times 10^{-5}$. {\color{black}The complex recurrence is likewise contractive, with magnitude gain $\prod_{t}\alpha_{t,k}$, but the phase coordinate itself is transported without distortion, so the information encoded in the argument arrives undistorted at the end of the sequence; its observable strength there is set by the accumulated magnitudes (\Cref{rem:lemscope}).} The complex recurrence does not avoid contraction; it separates a decaying channel, the magnitude, from a preserved channel, the phase, and places the transported information in the preserved one.

\begin{remark}[Interaction picture and effective dependency length]
\label{rem:interaction}
The decomposition $\phi_{t,k}=\tfrac12\Delta t_{t,k}(\delta_{t,k}+\omega_k)$ separates a fixed baseline rotation ($\omega_k$) from an input-dependent modulation ($\delta_{t,k}$). Transforming into the frame co-rotating with the baseline, the interaction picture of \citet{nebli2026quantum}, removes the baseline rotation and aligns positions that differ only by baseline phase. {\color{black}This suggests a third mechanism for the per-step advantage, complementary to the score's interaction rule (\Cref{sec:theory}) and gradient propagation (\Cref{sec:gradstab})}: absorbing the predictable part of the evolution into a change of frame shortens the effective length over which the input-dependent dynamics must be resolved. We record the hypothesis without testing it here.
\end{remark}

\subsection{Evaluation by parallel scan}
\label{sec:scan}
Equation~\eqref{eq:rec} is a first-order linear recurrence and admits an associative-scan solution. Writing the transition in polar form and accumulating over a chunk of length $C$,
\begin{equation}
\ket{\psi_{s+m}}=P_m\odot\Big(\ket{\psi_s}+\sum_{r=1}^{m}P_r^{-1}\odot b_{s+r}\Big),\qquad P_m=\prod_{r=1}^{m}a_{s+r},
\label{eq:scanform}
\end{equation}
where the cumulative products and sums are prefix scans over the chunk, and the final state of each chunk seeds the next. The cumulative product is evaluated in the logarithmic domain, as a prefix sum of $\log|a_{t,k}|$ clamped from below so that the reciprocal $P_r^{-1}$ cannot overflow, together with a prefix sum of $\arg a_{t,k}$, and is reassembled as a magnitude at an angle. The construction computes the exact trajectory in $O(T)$ arithmetic with $O(\log C)$ sequential depth. Because the recurrence is diagonal, each step costs $O(N)$, not the $O(N^{2})$ of a dense transition. The substrate is therefore cheaper per step than the dense real recurrence it replaces, and complex arithmetic enters only as a constant factor on the $O(N)$ term.

\subsection{Relation to real diagonal state-space models}
\label{sec:relssm}
Structurally, the recurrence \eqref{eq:rec} is a diagonal linear state-space model of the kind studied by S5~\citep{smith2023simplified} and the linear recurrent unit~\citep{orvieto2023resurrecting}, and it is worth stating exactly where it departs from them. A real diagonal model places each transition eigenvalue on the real line or inside the unit disc, and the information carried across time sits in the \emph{magnitude} of the state, which the eigenvalue attenuates geometrically. Our recurrence places the phase of each eigenvalue on the unit circle by \eqref{eq:cayley}, {\color{black}and the carried information sits in the \emph{argument} of the state, which the eigenvalue rotates without distortion; the magnitude, which sets how strongly that argument is observed downstream, is governed separately by the gate, whose sole role is forgetting.} The two designs make opposite choices about which component of a complex number transports and which forgets. The real diagonal model transports in magnitude and cannot forget without attenuating the transported signal. Ours transports in phase and forgets in magnitude, so transport and forgetting are decoupled. The gradient measurements of \Cref{sec:gradstab} are the empirical trace of this decoupling.

\subsection{Readout and the relaxation of the Born vocabulary head}
\label{sec:readout}
The second relaxation concerns the Born rule. In the idealized substrate the Born rule is the vocabulary readout, and its density-matrix lift is the source of the $\Omega(N^{2})$ advantage. But a Born vocabulary head evaluates $|\braket{e_v}{\psi}|^{2}$ against one measurement vector per token, costing $O(NV)$ per step, and realizing the full rank advantage requires $N=\Omega(\sqrt{V})$. At subword vocabularies the state width and readout cost would grow as $\sqrt{V}$ and $V$. Rather than pay this cost before the substrate has proven useful, we apply the Born rule where it is inexpensive, as the scoring function of a linear-attention kernel, and keep a conventional tied-softmax vocabulary head. The two backbones consequently read the complex state differently.

The choice of the squared modulus over some other nonlinear projection is likewise not arbitrary. The Born score is the rank-one case of the general quadratic readout $\operatorname{tr}(M\rho)$ with $M$ positive semidefinite: it is nonnegative, it is normalizable without exponentials, which is what permits the running-sum attention normalization in \eqref{eq:born-attn}, and it is linear in the density matrix $\rho$. The linearity in $\rho$ is the property on which the representational separation of \Cref{sec:wavefn} and the rank accounting of \Cref{sec:theory} rest; a generic nonlinearity of the state would surrender both the probabilistic interpretation and the analysis. The remaining alternatives, a complex recurrence read by a plain softmax and a real recurrence read by a Born score, are exactly the factorial cells specified in \Cref{sec:mechanism}.

In the state-space realization the recurrence output is $y_t=\bar C_t\odot\ket{\psi_t}$ for a learned complex projection $C_t$; real and imaginary parts are concatenated, mapped to the inner width, gated by a SiLU branch, and added to the residual stream, from which the tied-softmax head produces the distribution. The Born rule does not enter this readout at all: the complex substrate acts on the state-space backbone entirely through the recurrence. The attention realization scores position $i$ against position $j$ by
\begin{equation}
s_{ij}=\frac{|\braket{q_i}{k_j}|^{2}}{D\,\tau_i},\qquad \ket{\psi_i}=\frac{\sum_{j\le i}s_{ij}\ket{v_j}}{\sum_{j\le i}s_{ij}},
\label{eq:born-attn}
\end{equation}
a causal, sum-normalized attention whose weight is the squared modulus of a complex inner product rather than the softmax of a real dot product, with temperature $\tau_i$ and head width $D=N$. Queries and keys are complex and are rotated before scoring by a per-position phase $\theta_{t,k}=t\,\omega_k$ with learnable base frequencies, a complex relative encoding in the rotary family. Because the weight is a squared modulus, {\color{black}overlaps between query and key components reinforce or cancel according to their relative phase, with a cross-term strength proportional to the participating magnitudes $|\braket{q_i}{k_j}|$; the Born rule thus supplies interference at the level of attention even though the vocabulary readout remains a tied softmax.}

A two-state computation exhibits the mechanism shared by both readouts and formalizes the example of \Cref{sec:intro}. Let $\ket{\phi_1},\ket{\phi_2}$ be orthonormal states encoding the two readings and let $\ket{\psi}=\tfrac{1}{\sqrt 2}\big(\ket{\phi_1}+e^{i\theta}\ket{\phi_2}\big)$. A measurement vector $\ket{e}=\tfrac{1}{\sqrt 2}\big(\ket{\phi_1}+\ket{\phi_2}\big)$ aligned with their sum returns
\[
p=|\braket{e}{\psi}|^{2}=\tfrac12\,(1+\cos\theta),
\]
which falls from $1$ at $\theta=0$ to $0$ at $\theta=\pi$ as the relative phase alone advances, both magnitudes held fixed. The recurrence controls $\theta$ through the Cayley phase, so a downstream token can extinguish one reading by rotating it into phase opposition. A real readout, which does not represent $\theta$, can only attenuate a component toward zero; it cannot make a present component count with negative sign. This is the operational content of the separation recalled in \Cref{sec:wavefn}.

\subsection{Initialization}
\label{sec:init}
Initialization places nontrivial phase in the computation from the first step and spreads the per-channel frequencies across timescales. The complex prior and the complex measurement vectors of the attention readout are drawn from a circularly symmetric complex normal distribution and normalized to unit modulus; a real initialization would zero every interference cross term at the first step (\Cref{sec:state}) and empirically delays the onset of learning. The base frequencies $\omega_k$ are initialized on a geometric grid spanning roughly two decades, with alternating sign across channels, so the substrate starts with a spectrum of oscillation timescales instead of one characteristic rate; this mirrors the diagonal free Hamiltonian $H_0$ of the idealized model. Input projections use the Xavier rule, and the per-head parameter projection starts with a small standard deviation, so the input-dependent modulation of phase and gate begins near zero and grows over training. These choices are held fixed across all models and corpora.

\subsection{Architectural realizations}
\label{sec:realizations}
Both complex realizations inherit the block structure of Mamba and modify only the substrate. The shared block applies RMS normalization; an input projection $\mathbb{R}^{d}\to\mathbb{R}^{2d_{\mathrm{inner}}}$ that separates a signal branch from a SiLU gate branch; a depthwise causal convolution of kernel width 4 with a SiLU nonlinearity on the signal branch; a projection to the per-head substrate parameters; the substrate itself; a projection back to the inner width; the gate; and an output projection with a residual connection. The input embedding is tied to the output head. All models share one configuration: $d_{\mathrm{model}}=1024$, $d_{\mathrm{inner}}=2048$, $L=24$ layers, $H=16$ heads, $N=16$, convolution width 4, sequence length $1024$, byte vocabulary $256$, and 253M total parameters (\Cref{tab:params}).

The state-space realization, H-Mamba (H for the Hilbert-space state), obtains its substrate parameters from a per-head projection emitting $6N+1$ real numbers: the phase shift $\delta$ ($N$ values), a logit for the step size $\Delta t$ (1 value), a logit for the gate $\alpha$ ($N$ values), and the complex input and output projections $b$ and $C$ ($2N$ values each). The step size is the $\operatorname{softplus}$ of its logit, clamped at $0.1$; the gate is the logistic of its logit; the phase follows \eqref{eq:cayley}; and the recurrence \eqref{eq:rec} is evaluated by the chunked scan of \Cref{sec:scan} with chunk length 64. Every component downstream of the recurrence (the gate, the residual connection, and the tied-softmax head) is identical to that of the real Mamba.

The attention realization, H-Transformer, obtains complex queries, keys, values, and a temperature from the same per-head projection, hosted in the same shared block. Queries and keys are phase-rotated, scored by \eqref{eq:born-attn} under the causal running-sum normalization, and read out bilinearly against the query into the same tied-softmax head. The attention is computed by a single fused kernel, implemented in Triton with a compiled fallback, that evaluates the causal Born attention in one pass with single-precision accumulation; this lets the complex attention train in half precision without instability. Its real counterpart is the standard pre-norm Transformer block, rotary dot-product attention~\citep{su2021roformer} followed by a SiLU feed-forward layer, at its published configuration.

\paragraph{Parameter matching.}
For the comparison to isolate the substrate, each complex model and its real counterpart must have equal capacity. We match each pair tensor by tensor to within $0.02\%$ of the total, letting the real baselines absorb the small surplus introduced by the complex projections in their feed-forward capacity. Concretely, each real-Mamba block carries a small SwiGLU branch of hidden width $1258$ whose only function is parameter matching, and the real Transformer's feed-forward hidden width is $3096$; each value is chosen so that the pair totals 253M. {\color{black}No other dimension differs, so the results below are not attributable to a difference in total parameter count, though the two members of a pair allocate those parameters differently.}

\subsection{Parameter and computational cost}
\label{sec:cost}
The substrate is inexpensive relative to the block that hosts it. Per head, the recurrence adds the $6N+1$ parameters of the per-head projection and runs the scan in $O(N)$ arithmetic per step, against $O(N^{2})$ for a dense real transition of the same width. The complex state doubles state storage from $N$ to $2N$ reals per head, a negligible fraction of block memory, which the shared input and output projections dominate. \Cref{tab:cost} summarizes per-component parameters and per-step arithmetic. Because the extra arithmetic of complex numbers is a constant factor on a non-dominant term, the complex models are not materially more expensive per step than the real ones; the measured throughput of \Cref{sec:throughput} is consistent with this accounting.

\paragraph{Summary of the deployable substrate.}
Each head carries a complex state of width $N=16$, stored as $2N$ real numbers and initialized from a unit-modulus complex prior. The state evolves by the diagonal recurrence \eqref{eq:rec}. Each transition coefficient is a unit-modulus Cayley phase \eqref{eq:cayley}, which carries the information, multiplied by a sigmoid gate, which does the forgetting; the recurrence is evaluated exactly by a chunked scan in $O(N)$ arithmetic per step. The attention realization scores positions by the Born rule \eqref{eq:born-attn}; the state-space realization does not use the Born rule at all. Both realizations end in an ordinary tied-softmax head. Relative to the idealized substrate, two properties are surrendered: exact norm preservation, since the recurrence contracts, and the $\Omega(N^{2})$ vocabulary readout, since the head is a softmax. Two properties are retained: the unit-modulus phase and the interference it supports, and \Cref{lem:phase} will make the retention exact. Relative to the real baselines, the only changes are the complex state, the phase--gate transition, and, in the attention model, the Born score; parameters match to within $0.02\%$. \Cref{sec:theory} gives the two retained properties their formal statements.

\begin{table}[t]
\centering
\caption{Per-head parameters and per-step arithmetic of the substrate and the shared block components. Here $N=16$ is the state width, $d_{\mathrm{inner}}=2048$ the inner width, $d_{\mathrm{model}}=1024$, $V=256$ the vocabulary, and $T$ the sequence length.}
\label{tab:cost}
\small
\begin{tabular}{@{}lll@{}}
\toprule
Component & Parameters (per head) & Per-step arithmetic \\
\midrule
Dense real transition & $O(N^{2})$ & $O(N^{2})$ \\
Diagonal Cayley recurrence & $6N+1$ & $O(N)$ \\
Born attention score & shared with $q,k,v$ & $O(NT)$ per query \\
Input / output projections (shared) & $O(d_{\mathrm{inner}})$ & $O(d_{\mathrm{inner}})$ \\
Tied vocabulary head (shared) & tied to embedding & $O(d_{\mathrm{model}}V)$ \\
\bottomrule
\end{tabular}
\end{table}

\begin{figure}[t]
\centering
\includegraphics[width=\linewidth]{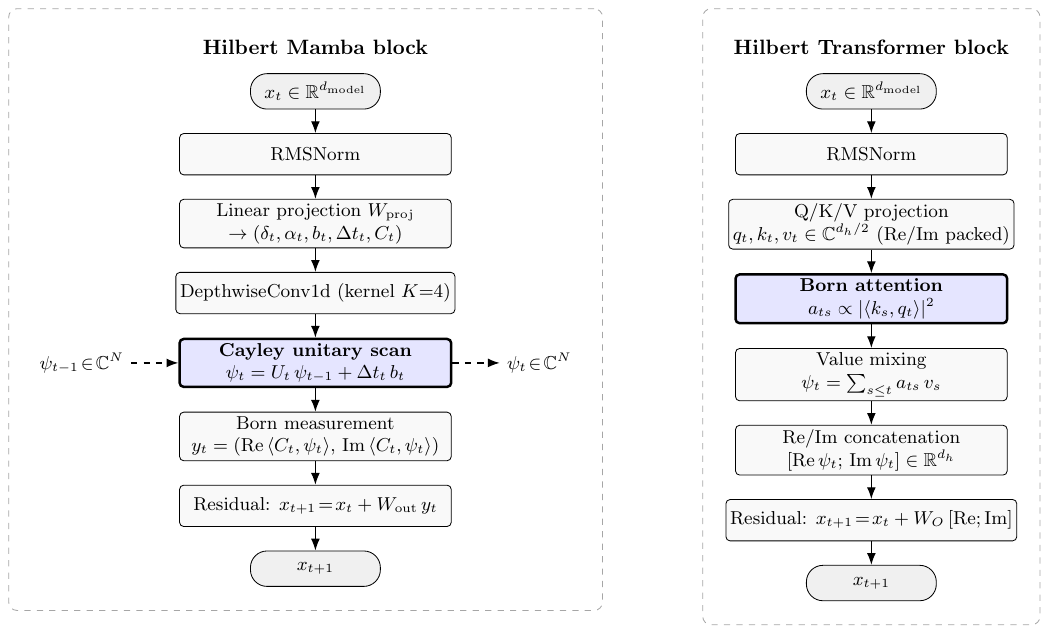}
\caption{\textbf{The substitution is local.} The state-space realization (left) and the attention realization (right) share the block skeleton of the classical backbone (unshaded): convolution branch, gate branch, residual stream, tied output head. The shaded components are the substrate: the complex state, the diagonal Cayley phase recurrence, and, in the attention realization, Born-rule scoring. Every unshaded tensor matches its real-valued counterpart in shape and initialization.}
\label{fig:method}
\end{figure}

\begin{table}[t]
\centering
\caption{Configuration shared by all four models. Real and complex counterparts are matched to within $0.02\%$ of parameters; the real baselines absorb the difference in feed-forward capacity (a parameter-matching SwiGLU branch of hidden width $1258$ in each Mamba block; feed-forward hidden width $3096$ in the Transformer).}
\label{tab:params}
\small
\begin{tabular}{@{}ll@{}}
\toprule
Hyperparameter & Value \\
\midrule
Layers $L$ & 24 \\
Model dimension $d_{\mathrm{model}}$ & 1024 \\
Inner dimension $d_{\mathrm{inner}}$ & 2048 \\
Heads $H$ / state width $N$ & 16 / 16 \\
Sequence length $T$ & 1024 \\
Vocabulary $V$ & 256 (byte-level) \\
Total parameters & 253M \\
\bottomrule
\end{tabular}
\end{table}
\section{Theoretical Analysis}
\label{sec:theory}
The separation recalled in \Cref{sec:wavefn} is about representation: it bounds the state dimension a real model needs to \emph{express} a task family. Our measurements are about optimization: they count the gradient steps a model needs to \emph{reach} a loss. This section provides two statements that connect the two. {\color{black}The first is a floor on the loss attainable by a rank-limited readout. It is stated for general model classes; \Cref{sec:floor} shows that it is inactive at the vocabulary head of every model trained here, so it motivates the present results without explaining them.} The second is an exact-transport lemma for relative phase, which formalizes the advantage available to the recurrence and predicts the gradient measurements of \Cref{sec:gradstab}.

\subsection{An optimization floor for the affine--softmax readout}
\label{sec:floor}
The floor rests on a simple observation: an affine--softmax head produces only logit matrices of low rank, so the set of predictors it can ever visit is a thin slice of the space in which the target lies. If the target carries spectral mass outside that slice, no trajectory through the slice approaches it; the loss is bounded away from the optimum at every step by an amount proportional to that mass, and per-step progress must stall as the trajectory nears the bound. The proposition below makes this quantitative, and its proof is elementary.

Fix a conditioning distribution supported on $M$ contexts, taken uniform for simplicity, and let $L^{\star}\in\mathbb{R}^{M\times V}$ be the matrix whose row $c$ holds the optimal next-token logits for context $c$. Because the softmax is invariant to adding a constant to a row, we pass to row-centered representatives: for any $Z\in\mathbb{R}^{M\times V}$ write $\widetilde{Z}=Z(I-\tfrac1V\mathbf{1}\mathbf{1}^{\top})$, which projects each row orthogonal to the all-ones vector. An affine--softmax head of width $d$ produces stacked logits $Z=HW^{\top}+\mathbf{1}b^{\top}$ with $H\in\mathbb{R}^{M\times d}$, so $\operatorname{rank}(Z)\le d+1$~\citep{yang2018breaking}, and centering cannot raise rank: $\operatorname{rank}(\widetilde{Z})\le d+1$ for every parameter setting. Write $\sigma_1\ge\sigma_2\ge\cdots$ for the singular values of $\widetilde{L}^{\star}$, and let $p_{\min}>0$ be a floor such that every conditional probability produced along the segment between the realized and the optimal logits is at least $p_{\min}$ (an assumption about the region training traverses).

\begin{proposition}[Optimization floor for the affine--softmax readout]
\label{prop:gap}
Under the assumptions above, for every parameter setting of the backbone and head, and hence after any number of gradient steps, the expected excess cross-entropy obeys
\begin{equation}
\mathbb{E}_{c}\!\left[\mathcal{L}_c-\mathcal{L}_c^{\star}\right]\;\ge\;\frac{p_{\min}}{2M}\sum_{i>d+1}\sigma_i^{2}\big(\widetilde{L}^{\star}\big).
\label{eq:gap}
\end{equation}
The floor is strictly positive whenever $\operatorname{rank}(\widetilde{L}^{\star})>d+1$ and vanishes otherwise.
\end{proposition}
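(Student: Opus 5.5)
The plan is to reduce the excess cross-entropy in each context to a squared distance between centered logit vectors, and then to bound the sum of those distances over contexts from below by a low-rank approximation error. Fix a context $c$ and write $\ell_c(z)=\log\sum_v e^{z_v}-\sum_v p^{\star}_{c,v}z_v$. This is the expected cross-entropy of logits $z$ against the true conditional $p^{\star}_c=\operatorname{softmax}(L^{\star}_{c})$. Then $\mathcal{L}_c-\mathcal{L}_c^{\star}=\ell_c(Z_c)-\ell_c(L^{\star}_c)$, where $Z_c$ is row $c$ of the realized logit matrix. Since $\ell_c$ is invariant under adding constants to $z$, both rows may be replaced by their centered versions $\widetilde Z_c$ and $\widetilde L^{\star}_c$. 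The gradient of $\ell_c$ is $\operatorname{softmax}(z)-p^{\star}_c$, so it vanishes at $L^{\star}_c$. Taylor's theorem with integral remainder along the segment from $\widetilde L^{\star}_c$ to $\widetilde Z_c$ then gives
\[
\ell_c(\widetilde Z_c)-\ell_c(\widetilde L^{\star}_c)=\int_0^1(1-s)\,u^{\top}\nabla^2\ell_c\big(\widetilde L^{\star}_c+s u\big)\,u\,ds,\qquad u=\widetilde Z_c-\widetilde L^{\star}_c .
\]

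The key step is a uniform curvature bound on the centered subspace. The Hessian at a point $z$ is $\operatorname{diag}(p)-pp^{\top}$ with $p=\operatorname{softmax}(z)$, so $u^{\top}\nabla^2\ell_c\,u=\operatorname{Var}_{p}(u)=\sum_v p_v(u_v-\bar u_p)^2$, where $\bar u_p=\sum_v p_v u_v$. By the segment assumption every $p_v\ge p_{\min}$ along the path, so this quantity is at least $p_{\min}\sum_v(u_v-\bar u_p)^2$. The unweighted sum of squares $\sum_v(u_v-m)^2$ is minimized over $m$ at the arithmetic mean of $u$, which is $0$ because $u\perp\mathbf 1$. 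Hence the Hessian form is at least $p_{\min}\|u\|^2$. Since $\int_0^1(1-s)\,ds=\tfrac12$, each context satisfies
\[
\mathcal{L}_c-\mathcal{L}_c^{\star}\ge\tfrac{p_{\min}}{2}\,\|\widetilde Z_c-\widetilde L^{\star}_c\|^2 .
\]
I expect this step to be the main obstacle. Log-sum-exp is not strongly convex globally: it is flat along $\mathbf 1$, and its curvature degenerates as probabilities approach zero. Both the centering argument and the $p_{\min}$ hypothesis are needed precisely here, and I will check that the hypothesis is used only on the segment, as stated.

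Averaging over the $M$ equally weighted contexts gives $\mathbb{E}_c[\mathcal{L}_c-\mathcal{L}_c^{\star}]\ge\frac{p_{\min}}{2M}\|\widetilde Z-\widetilde L^{\star}\|_F^2$. As noted before the proposition, $\operatorname{rank}(\widetilde Z)\le d+1$ for every parameter setting of backbone and head. The Eckart--Young--Mirsky theorem therefore gives $\|\widetilde Z-\widetilde L^{\star}\|_F^2\ge\sum_{i>d+1}\sigma_i^2(\widetilde L^{\star})$, which is \eqref{eq:gap}. The bound holds for every parameter setting, and so at every iterate of any training trajectory. Finally, the tail sum is positive exactly when $\widetilde L^{\star}$ has more than $d+1$ nonzero singular values, that is, when $\operatorname{rank}(\widetilde L^{\star})>d+1$, and it is empty, hence zero, otherwise.
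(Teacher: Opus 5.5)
Your proposal is correct and follows the paper's proof essentially step for step. Both arguments write the per-context excess loss as the integral Taylor remainder of log-sum-exp on centered logits, bound the Hessian form by $p_{\min}\|u\|^2$ using the variance identity together with $u\perp\mathbf 1$, average over the $M$ contexts, and conclude by Eckart--Young from $\operatorname{rank}(\widetilde Z)\le d+1$. Your side check that the $p_{\min}$ hypothesis carries over to the centered segment, because centering is linear and softmax is shift-invariant, is a detail the paper leaves implicit.
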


\begin{proof}
The excess loss at context $c$ is $\mathrm{KL}(p^{\star}_c\,\|\,\hat p_c)$. With $A(z)=\log\sum_v e^{z_v}$, a direct computation gives $\mathrm{KL}(p^{\star}_c\|\hat p_c)=A(\hat z_c)-A(z^{\star}_c)-\langle\nabla A(z^{\star}_c),\hat z_c-z^{\star}_c\rangle$, the Bregman divergence of $A$; both sides are invariant to row shifts, so take $\Delta_c=\widetilde{\hat z}_c-\widetilde{z}^{\star}_c$. Taylor's theorem with integral remainder yields
$\mathrm{KL}=\int_0^1(1-s)\,\Delta_c^{\top}\nabla^2 A\big(z^{\star}_c+s\Delta_c\big)\Delta_c\,ds$ with $\nabla^2A(u)=\operatorname{diag}(q)-qq^{\top}$, $q=\operatorname{softmax}(u)$. For any probability vector $q$ and any $v$ with $\sum_k v_k=0$, the variance identity gives $v^{\top}(\operatorname{diag}(q)-qq^{\top})v=\sum_k q_k(v_k-m)^2$ with $m=q^{\top}v$, and
\[
\sum_k q_k(v_k-m)^2\;\ge\;p_{\min}\sum_k(v_k-m)^2\;=\;p_{\min}\big(\|v\|^2+Vm^2\big)\;\ge\;p_{\min}\|v\|^2 .
\]
Hence $\mathrm{KL}(p^{\star}_c\|\hat p_c)\ge\tfrac{p_{\min}}{2}\|\Delta_c\|^2$, and averaging over the $M$ contexts,
$\mathbb{E}_c[\mathcal{L}_c-\mathcal{L}_c^{\star}]\ge\tfrac{p_{\min}}{2M}\|\widetilde{Z}-\widetilde{L}^{\star}\|_F^{2}$.
Since $\operatorname{rank}(\widetilde{Z})\le d+1$ at every parameter setting, the Eckart--Young theorem bounds the Frobenius distance below by the tail of the spectrum: $\|\widetilde{Z}-\widetilde{L}^{\star}\|_F^{2}\ge\sum_{i>d+1}\sigma_i^2(\widetilde{L}^{\star})$. Combining the two displays gives \eqref{eq:gap}. The tail sum is positive exactly when $\operatorname{rank}(\widetilde{L}^{\star})>d+1$.
\end{proof}

{\color{black}Three consequences bear on the experiments, and the first is a delimitation. The floor is \emph{not active at the vocabulary head of the models trained here}: the head width is $d=d_{\mathrm{model}}=1024$ while the byte vocabulary gives centered target rank at most $V-1=255$, so $d+1$ exceeds the maximal possible rank and the vocabulary floor is zero for every model in this study. The proposition therefore does not explain the present byte-level results at the vocabulary, and we state this plainly; it serves as general motivation, and it delimits where readout rank could matter: heads narrower than their vocabulary, as arise at subword scale (\Cref{sec:scaling}), and, by analogy, ranking operations whose width is far below the number of alternatives they rank. Attention is such an operation: each head scores $T=1024$ positions through queries and keys of dimension $D=16$, so an attention-level analogue of the rank constraint is far from vacuous there. Second, where a floor is active it constrains the whole trajectory, not one step: no schedule, optimizer, or backbone takes a rank-limited head below it, so per-step progress must decay as training approaches it, while a scorer of higher effective rank faces no such ceiling. Third, the proposition should not be read as a claim that Born attention removes the softmax bottleneck: both H-models retain the tied softmax vocabulary head, with the same (inactive) vocabulary-rank situation as their baselines. The Born score is better described as a richer \emph{internal interaction rule}, a quadratic score with signed cross terms, whose relation to the floor is the attention-level analogy above rather than an instance of the proposition. H-Mamba uses neither a Born score nor a modified head, so no readout-rank mechanism of any kind is available to it, and its acceleration is most naturally attributed to the recurrence (\Cref{sec:evolution}); we return to this division of labor in \Cref{sec:mechanism}. The full $\Omega(N^{2})$ mechanism concerns the idealized Born vocabulary readout, which the deployable substrate does not instantiate.}

\subsection{Exact transport of relative phase}
\label{sec:phasetransport}
The second statement concerns the recurrence. The quantity that the quadratic readout consumes is relative phase: by the two-state computation of \Cref{sec:readout}, the score depends on the off-diagonal density-matrix coordinates $\rho_{jk}=\psi_j\bar\psi_k$ through their arguments. {\color{black}The following lemma is deliberately narrow: it shows that for an isolated homogeneous component, the recurrence \eqref{eq:rec} accumulates phase additively and transports the phase \emph{coordinate} without distortion, at unit sensitivity, regardless of the gates. What it does not assert, and what \Cref{rem:lemscope} delimits, is that loss gradients through the phase are constant: the observable effect of a phase, and the gradient that returns through it, are scaled by the participating magnitudes.}

\begin{lemma}[Exact transport of relative phase]
\label{lem:phase}
Consider the homogeneous recurrence $\ket{\psi_{t+1}}=a_t\odot\ket{\psi_t}$ with $a_{t,k}=\alpha_{t,k}\lambda_{t,k}$, $\alpha_{t,k}\in(0,1)$, $|\lambda_{t,k}|=1$, and write $\theta_{t,k}=\arg\lambda_{t,k}$. Let $\rho_{jk,t}=\psi_{t,j}\bar\psi_{t,k}$ be an off-diagonal coordinate of the density matrix. Then for every horizon $T$,
\begin{equation}
\arg\rho_{jk,T}=\arg\rho_{jk,0}+\sum_{t<T}\big(\theta_{t,j}-\theta_{t,k}\big),
\qquad
|\rho_{jk,T}|=|\rho_{jk,0}|\prod_{t<T}\alpha_{t,j}\,\alpha_{t,k}.
\label{eq:transport}
\end{equation}
In particular, $\partial\arg\rho_{jk,T}\big/\partial\theta_{s,j}=1$ for every $s<T$, independently of the horizon and of the gates, whereas every derivative of the magnitude carries the accumulated contraction $\prod_t\alpha$.
\end{lemma}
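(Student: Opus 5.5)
The plan is to solve the homogeneous recurrence in closed form, coordinate by coordinate, and then read off both the argument and the modulus of the off-diagonal entry from that closed form. Because the transition is diagonal, the recurrence $\ket{\psi_{t+1}}=a_t\odot\ket{\psi_t}$ decouples into $N$ scalar recurrences $\psi_{t+1,k}=a_{t,k}\psi_{t,k}$. A direct induction on $T$ gives
\[
\psi_{T,k}=\psi_{0,k}\prod_{t<T}\alpha_{t,k}\lambda_{t,k}=\psi_{0,k}\Big(\prod_{t<T}\alpha_{t,k}\Big)\exp\Big(i\sum_{t<T}\theta_{t,k}\Big),
\]
where the second equality uses $\lambda_{t,k}=e^{i\theta_{t,k}}$, which follows from $|\lambda_{t,k}|=1$, and the fact that each $\alpha_{t,k}$ is a positive real.

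Next I form $\rho_{jk,T}=\psi_{T,j}\bar\psi_{T,k}$. Conjugation reverses the phase of coordinate $k$ and leaves its real gate product unchanged, so
\[
\rho_{jk,T}=\rho_{jk,0}\Big(\prod_{t<T}\alpha_{t,j}\alpha_{t,k}\Big)\exp\Big(i\sum_{t<T}(\theta_{t,j}-\theta_{t,k})\Big).
\]
The gate product is a strictly positive real number, since every $\alpha\in(0,1)$. Multiplying by it therefore leaves the argument unchanged and scales the modulus. This yields both identities in \eqref{eq:transport}.

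The one point that needs care is that $\arg$ is multivalued. I will state the first identity modulo $2\pi$. Equivalently, I will interpret it for a continuous lift of the argument along the trajectory, which exists as long as $\rho_{jk,0}\neq0$. That condition is the only nondegeneracy assumption, and it propagates to all $t$ because each factor $a_{t,k}$ is nonzero. I expect this bookkeeping to be the main obstacle. With the lift fixed, the argument of $\rho_{jk,T}$ is the explicit affine function $\arg\rho_{jk,0}+\sum_{t<T}(\theta_{t,j}-\theta_{t,k})$ of the phases.

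Differentiating this function with respect to $\theta_{s,j}$ for any $s<T$ gives exactly $1$. The derivative does not depend on $T$, and it does not depend on the gates, which do not appear in the expression at all. The $\theta_{s,j}$ are treated as independent variables here; they are not passed through the Cayley parametrization. By contrast, the magnitude $|\rho_{jk,0}|\prod_{t}\alpha_{t,j}\alpha_{t,k}$ is a product that contains every gate. Differentiating it with respect to any single $\alpha_{s,\cdot}$ removes one factor and leaves the remaining accumulated contraction. Differentiating it with respect to the initial magnitude leaves the full product. This establishes the final contrast claimed in the lemma.
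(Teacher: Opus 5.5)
Your proposal is correct and follows essentially the paper's own proof: unroll the diagonal recurrence, factor $a_{t,j}\bar a_{t,k}=\alpha_{t,j}\alpha_{t,k}e^{i(\theta_{t,j}-\theta_{t,k})}$, read off the argument and modulus, and differentiate. Your extra care about the branch of $\arg$ (the identity holds modulo $2\pi$, or for a lift, under the nondegeneracy condition $\rho_{jk,0}\neq 0$) fills in a point the paper leaves implicit.
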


\begin{proof}
Unrolling the recurrence gives $\psi_{T,k}=\psi_{0,k}\prod_{t<T}a_{t,k}$, and therefore $\rho_{jk,T}=\psi_{0,j}\bar\psi_{0,k}\prod_{t<T}a_{t,j}\bar a_{t,k}$. Since $a_{t,j}\bar a_{t,k}=\alpha_{t,j}\alpha_{t,k}\,e^{i(\theta_{t,j}-\theta_{t,k})}$, taking arguments and moduli yields \eqref{eq:transport}; the argument is additive in the phases and free of the gates. Differentiation is immediate. {\color{black}By linearity of the recurrence, the same identities apply to each input contribution $P^{-1}_r\odot b_{s+r}$ of the inhomogeneous solution \eqref{eq:scanform} \emph{separately}; the phase of their sum is a nonlinear function of the parts and can change through interference, so the transport identity governs the parts, not the aggregate.}
\end{proof}

\begin{remark}[Scope of the lemma]
\label{rem:lemscope}
{\color{black}Three delimitations. First, the lemma does not claim a better Jacobian spectrum. The state-to-state Jacobian of \eqref{eq:rec} is $\operatorname{diag}(a_t)$, whose singular values are the gates $\alpha_{t,k}$; a real gated diagonal recurrence at the same gates has the identical spectrum. Any analysis that inspects only the singular values of the recurrence therefore cannot distinguish the two substrates. The distinction resides in the complex structure: the transition acts on each coordinate as a rotation composed with a scaling, the rotation is an isometry of the phase, and the phase is where the substrate stores the information the readout consumes. In a real substrate every information-bearing coordinate is a magnitude, so every information-bearing sensitivity decays with the gates; in the complex substrate the sensitivity of the phase \emph{coordinate} is exactly one at every lag, by \eqref{eq:transport}. Second, coordinate sensitivity is not gradient magnitude. A loss reads $\arg\rho_{jk}$ only through cross terms of size $2|\rho_{jk}|$, so the gradient arriving at a phase parameter is the unit coordinate sensitivity multiplied by the contracted magnitude $\prod_t\alpha_{t,j}\alpha_{t,k}$. The correct statement of the advantage is therefore \emph{decoupling}, not unconditional losslessness: a complex channel can hold its gates near one and transmit phase at full strength while other channels forget, whereas a real channel's single modulus must serve as both its memory and its forgetting, so retention and transmission cannot be assigned to different coordinates. Third, the lemma concerns forward transport of the represented quantity; the backward pass inherits the same rotational structure because the adjoint of a rotation is a rotation, but whether trained models actually hold carrier gates near one is an empirical question. The per-layer gradient measurements of \Cref{sec:gradstab} are consistent with their doing so, and are offered as supporting diagnostics rather than as proof of long-range gradient transmission. A convergence-rate analysis of the joint recurrence-plus-readout system, for instance through the conditioning of the Fisher information, remains open and is noted in \Cref{sec:limitations}.
}\end{remark}

{\color{black}It is worth stating plainly what the two results do and do not contribute, since both have just been delimited. Neither is a derivation of the measured step ratios. \Cref{prop:gap} is inactive at the vocabulary of these experiments and bears on the design only through the analogy to narrow ranking operations; \Cref{lem:phase} concerns an isolated component and yields a decoupling statement rather than a gradient guarantee. What they establish is the \emph{direction} of two candidate mechanisms and, more importantly, the reason the substrate was built as it was: a score whose reachable set is not confined by the width of the state, and a recurrence whose information coordinate is not the coordinate that contraction attenuates. The empirical claims of this paper rest on the measurements of \Cref{sec:results}, not on these statements, and the two are kept separate deliberately. With that scope fixed, the results still make complementary predictions: a score-side advantage wherever the rank of a ranking operation binds, which the attention realization tests, and a transport-side advantage wherever gradients must cross many positions, which the state-space realization tests. \Cref{sec:setup} describes that comparison.}

\section{Experimental Setup}
\label{sec:setup}
The goal of the experiments follows from \Cref{sec:theory}: to measure the per-step advantage of the substrate while excluding capacity, tuning, corpus, and backbone as alternative explanations. We evaluate four models, the real and complex realizations of two backbones, on three byte-level corpora under a single training protocol. {\color{black}The design is intended to make the substrate the principal systematic difference within each pair, and to distinguish substrate effects from effects of dataset scale; the respects in which the paired models nonetheless differ are recorded in \Cref{sec:limitations}.}

\paragraph{Models.}
The four models are the real Mamba and its complex realization H-Mamba, and a real Transformer and its complex realization H-Transformer. {\color{black}The real Transformer uses standard softmax dot-product attention; the complex realization replaces it with the sum-normalized Born score of \eqref{eq:born-attn}, a linear-attention-style kernel, so the attention pair differs in the functional form of the score, including its normalization, which is the readout half of the substrate axis (\Cref{sec:limitations} records the caveat that the pair is parameter-matched but not architecturally identical).} The real Mamba is a selective state-space model with state dimension 16 and a $\Delta$-projection rank of 64; the real Transformer is a pre-normalized decoder with rotary position embeddings and dot-product attention. Both follow their published configurations~\citep{gu2023mamba,su2021roformer}, and neither is weakened: the real Transformer attains the lowest terminal loss of any model in this study on enwik8 ($1.384$ bpc, \Cref{tab:bpc}). All four share the configuration of \Cref{tab:params}, read out through a weight-tied byte embedding, and are matched in parameter count to within $0.02\%$ as described in \Cref{sec:realizations}.

\paragraph{Data.}
We model text at the byte level~\citep{xue2022byt5}: the vocabulary is the 256 byte values, and performance is reported as bits per character (bpc), the cross-entropy in nats divided by $\ln 2$. This is the average number of bits the model needs to encode the next byte, and lower is better. Byte-level modeling removes the tokenizer as a confound. {\color{black}We note for clarity that it also renders the vocabulary-level floor of \Cref{prop:gap} inactive, since the head width exceeds the byte vocabulary; the rank considerations of \Cref{sec:theory} pertain, if anywhere, to the attention scores, whose per-head width $D=16$ is far below the $T=1024$ positions they rank.} The three corpora span $150\times$ in size: enwik8~\citep{enwik8}, 100~MB of Wikipedia markup; OpenWebText~\citep{openwebtext}, a 3~GB stream of web text; and FineWeb~\citep{fineweb}, a 15~GB subset of CommonCrawl. Each corpus is read as raw bytes, partitioned $90/5/5$ into training, validation, and test, and segmented into non-overlapping windows of 1024 bytes, the target at each position being the next byte. The test portions are held out and untouched; every number in this paper is computed on the validation portions, and the released logs permit test evaluation.

\paragraph{Optimization.}
A single protocol is applied without modification to all four models, so no element of the schedule can favor one substrate. The optimizer is AdamW~\citep{kingma2015adam,loshchilov2019decoupled} with $\beta_1=0.9$, $\beta_2=0.95$, and decoupled weight decay $0.1$ on tensors of rank at least 2; the learning rate follows a cosine schedule~\citep{loshchilov2017sgdr} from $6\times 10^{-4}$ to $6\times 10^{-5}$ with 750 warmup steps; gradients are clipped to unit norm~\citep{pascanu2013difficulty}; computation is \texttt{bf16} with per-layer activation checkpointing. These values follow the published recipe for language models of this size~\citep{brown2020language}. None was tuned to either substrate, and we deliberately performed no per-model search: unequal tuning effort would confound the substrate comparison, whereas a common published recipe favors neither side by construction. The residual possibility that the shared recipe suits one substrate better than the other is discussed in \Cref{sec:limitations}. Training is data-parallel over four A100-80GB accelerators at 256 sequences each, an effective batch of 1024 sequences, or $1{,}048{,}576$ tokens, per step. Validation loss is evaluated every 20 steps and the best checkpoint retained. \Cref{tab:hparams} lists the protocol in full.

\begin{table}[t]
\centering
\caption{Complete optimization protocol, applied without modification to all twelve runs.}
\label{tab:hparams}
\small
\begin{tabular}{@{}ll@{}}
\toprule
Hyperparameter & Value \\
\midrule
Optimizer & AdamW (fused) \\
Peak / minimum learning rate & $6\times10^{-4}$ / $6\times10^{-5}$ \\
Schedule & cosine, $750$ warmup steps \\
Weight decay & $0.1$ (tensors of rank $\ge 2$) \\
$\beta_1,\beta_2$ & $0.9,\,0.95$ \\
Gradient clipping & unit norm \\
Precision & \texttt{bf16} autocast \\
Activation checkpointing & per layer \\
Batch (per device / effective) & $256$ / $1024$ sequences \\
Tokens per step & $1{,}048{,}576$ \\
Steps & $1000$ (OWT/FineWeb), $1500$ (enwik8) \\
Validation interval & every $20$ steps \\
Hardware & $4\times$ A100-80GB \\
Framework & PyTorch~\citep{paszke2019pytorch} \\
\bottomrule
\end{tabular}
\end{table}

\paragraph{Budget.}
The optimization budget is 1000 steps on OpenWebText and FineWeb and 1500 on enwik8; unless stated otherwise, models are compared at the common budget of 1000 steps. This is the appropriate basis for a claim about per-step optimization, and the choice is consequential on the smallest corpus. Beyond roughly 1000 steps the complex models, which extract more from each step, begin to overfit the 100~MB of enwik8; their validation loss climbs toward $2.2$--$2.3$ bpc by step 1500 while the slower real Mamba is still improving. \Cref{sec:smalldata} treats this behavior, which is a consequence of the substrate's sample efficiency rather than a defect of it.

\paragraph{Runs, controls, and reproducibility.}
Three properties make the comparison a test of the substrate rather than of tuning: the protocol is identical across models; the counterparts are parameter-matched to within $0.02\%$; and every training log, together with the parser that converts logs into the tables and figures below, is released, so each quantity is re-derivable, not asserted. Twelve runs cover the four models on the three corpora. The real-Mamba run on FineWeb stopped immediately after its step-1000 validation evaluation, which is the common comparison point, so it enters every comparison below; we note the single place where its extended trajectory is missing. The methodological caveat is that each cell is a single run; \Cref{sec:limitations} states which conclusions this qualifies and calibrates, against the smallest effects in the paper, which effects seed variance could plausibly explain.
\section{Results}
\label{sec:results}
{\color{black}Four models, three corpora, and one protocol yield twelve runs. The per-step comparison comes first (\Cref{sec:perstep}), then the behavior after the learning-rate warmup ends (\Cref{sec:postwarmup}), terminal quality and the behavior at the data limit (\Cref{sec:finalbpc}), and finally throughput and the gradient measurements (\Cref{sec:throughput}).}

\subsection{Per-step convergence}
\label{sec:perstep}
\label{sec:absolute}
\Cref{fig:val_bpc} plots validation loss against optimization step for all runs. We summarize each pair by the ratio of steps needed to reach a common loss. For a target $\ell$, the ratio is the step at which the real model first attains $\ell$, divided by the step at which the complex model first attains it; each step is obtained by linear interpolation between the validation evaluations (every 20 steps) that bracket the target. \Cref{tab:learning_speed} reports this ratio at three targets; \Cref{tab:absolute} reports the underlying step counts, and \Cref{fig:steps} draws them. The targets $2.5$, $2.2$, and $2.0$ bpc lie in the interval over which all four models are still descending, so the ratio measures the rate of convergence, not the eventual separation of terminal losses. Lower targets are excluded because the slower models do not attain them within the budget, which would turn a measurement into a lower bound. As a representative instance: on enwik8, H-Mamba first reaches $2.0$ bpc at step 191; the real Mamba reaches it at step 575.

\begin{table}[t]
\centering
\caption{\textbf{Optimization steps to reach a target validation loss, real divided by complex.} A value of $3.0$ indicates that the real model requires three times as many steps as its complex counterpart to reach the stated bits per character. Ratios are read from the curves of \Cref{fig:val_bpc}; a larger value is a larger per-step advantage for the substrate.}
\label{tab:learning_speed}
\small
\setlength{\tabcolsep}{6pt}
\begin{tabular}{@{}llccc@{}}
\toprule
\textbf{Pair} & \textbf{Corpus} & \textbf{$2.5$} & \textbf{$2.2$} & \textbf{$2.0$} \\
\midrule
H-Mamba / Mamba & enwik8  & $3.2$ & $3.0$ & $3.0$ \\
H-Mamba / Mamba & OpenWebText & $3.1$ & $3.0$ & $2.9$ \\
H-Mamba / Mamba & FineWeb & $3.0$ & $2.8$ & $3.5$ \\
\midrule
H-Transformer / Transformer & enwik8  & $2.2$ & $2.0$ & $1.8$ \\
H-Transformer / Transformer & OpenWebText & $2.5$ & $2.3$ & $2.2$ \\
H-Transformer / Transformer & FineWeb & $2.5$ & $2.3$ & $2.1$ \\
\bottomrule
\end{tabular}
\end{table}

\begin{table}[t]
\centering
\caption{Step at which each model first attains a target validation loss (linear interpolation between evaluations). The ratios of \Cref{tab:learning_speed} are the real counts divided by the complex counts.}
\label{tab:absolute}
\small
\setlength{\tabcolsep}{5pt}
\begin{tabular}{@{}llcccc@{}}
\toprule
& & \multicolumn{2}{c}{State-space} & \multicolumn{2}{c}{Attention} \\
\cmidrule(lr){3-4}\cmidrule(lr){5-6}
Corpus & Target & H-Mamba & Mamba & H-Trans. & Trans. \\
\midrule
enwik8 & $2.5$ & 78 & 252 & 109 & 237 \\
enwik8 & $2.2$ & 121 & 360 & 156 & 317 \\
enwik8 & $2.0$ & 191 & 575 & 224 & 405 \\
\midrule
OpenWebText & $2.5$ & 75 & 237 & 101 & 252 \\
OpenWebText & $2.2$ & 118 & 353 & 151 & 350 \\
OpenWebText & $2.0$ & 197 & 581 & 219 & 484 \\
\midrule
FineWeb & $2.5$ & 79 & 240 & 105 & 264 \\
FineWeb & $2.2$ & 133 & 373 & 158 & 366 \\
FineWeb & $2.0$ & 217 & 758 & 247 & 514 \\
\bottomrule
\end{tabular}
\end{table}

\begin{figure}[t]
\centering
\includegraphics[width=\linewidth]{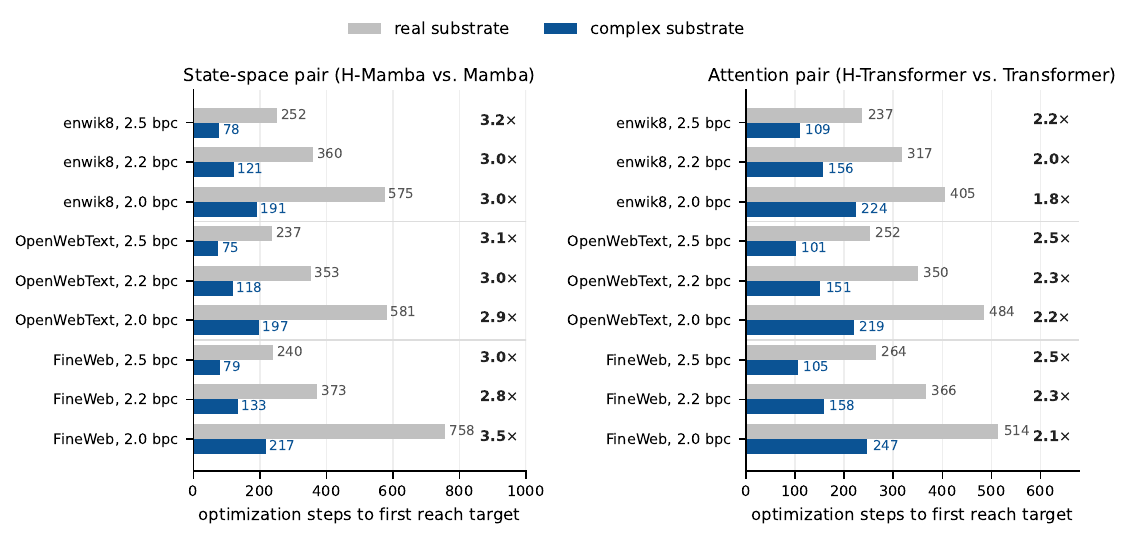}
\caption{\textbf{Steps to first reach each validation target}, drawn from the counts of \Cref{tab:absolute}. The complex model (blue) requires fewer steps than its real counterpart (gray) in all 18 comparisons; annotated ratios are those of \Cref{tab:learning_speed}. The complex state-space model passes $2.5$ bpc within 75--79 steps on every corpus, a level the real model reaches after 237--252 steps.}
\label{fig:steps}
\end{figure}

Three features of \Cref{tab:learning_speed} determine the interpretation of the result.

\paragraph{Stability across thresholds.}
Across the three columns of the table, the ratio is nearly constant as the target descends from $2.5$ to $2.0$ bpc. A transient advantage of initialization would appear as a ratio decaying toward one with descending target; instead, the separation is maintained through the steep portion of the learning curve, so the complex models do not merely start faster and get overtaken as training proceeds.

\paragraph{Stability across corpora.}
Within each backbone, the rows of the table agree across corpora that differ by $150\times$ in size: approximately 3 for the state-space pair on enwik8, OpenWebText, and FineWeb alike, and approximately 2 for the attention pair. An effect tied to the statistics of a particular corpus would vary across these rows, and it does not.

\paragraph{Stability across backbones.}
The four models constitute a $2\times 2$ design over backbone and substrate, and both complex cells improve on their real cells by a stable, backbone-characteristic factor. The most informative cell is H-Mamba. {\color{black}Its vocabulary readout is an ordinary tied softmax, so no readout mechanism is available to it, and its acceleration, the larger of the two, is most naturally attributed to the recurrence, where \Cref{sec:evolution} locates the retained benefit of the substrate.}

\paragraph{Crossings.}
Within the common budget, the complex curve lies below its real counterpart at every recorded evaluation for five of the six pairs. The exception is the attention pair on enwik8, where the rotary Transformer, which attains the lowest terminal loss of any model in this study on this corpus, crosses below H-Transformer within the budget as the complex model begins to saturate the 100~MB corpus; it finishes at $1.384$ against $1.516$ bpc (\Cref{tab:bpc}). Beyond the budget, and again on enwik8 only, the real Mamba eventually overtakes the by-then overfitting H-Mamba (\Cref{sec:smalldata}). {\color{black}Both crossings occur in the regime where the sample efficiency of the substrate exhausts the smallest corpus; on the web corpora no curve crosses at any point. The measured targets are, however, all crossed during the learning-rate warmup, and \Cref{sec:postwarmup} examines what happens once it ends.}

\begin{figure}[t]
\centering
\includegraphics[width=\linewidth]{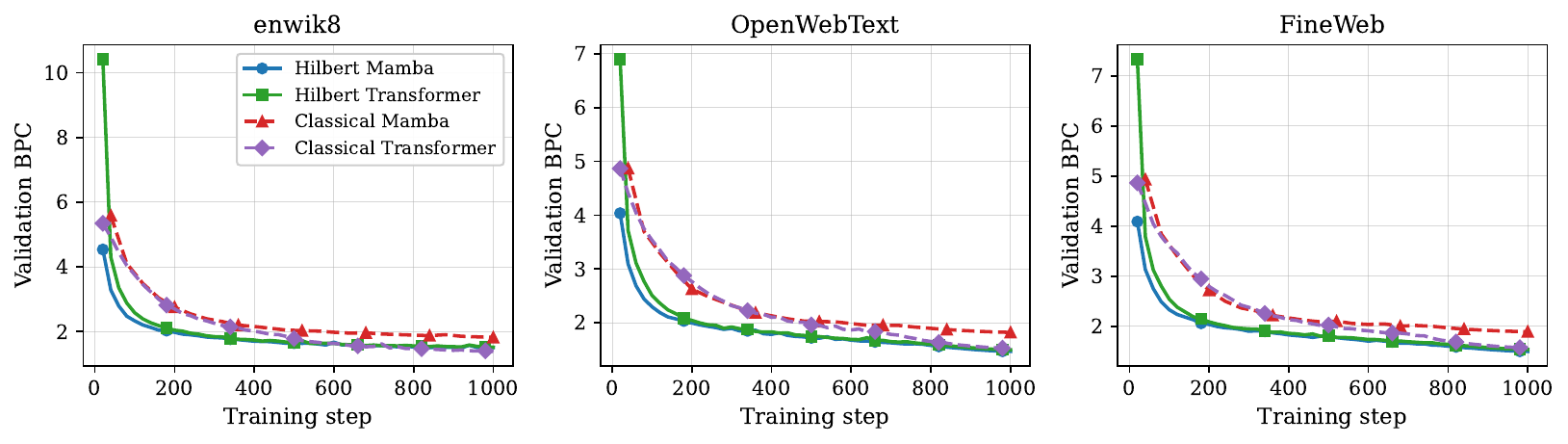}
\caption{\textbf{Per-step convergence on the three corpora.} Validation bpc against optimization step. Within the measured range of targets, each complex realization (solid) reaches every loss value in fewer steps than its real counterpart (dashed); the horizontal gap at a fixed loss is the ratio tabulated in \Cref{tab:learning_speed}. On enwik8 the complex curves turn upward beyond roughly step 1000 as the 100~MB corpus is overfit (\Cref{sec:smalldata}).}
\label{fig:val_bpc}
\end{figure}

{\color{black}\subsection{Behavior after the warmup phase}
\label{sec:postwarmup}
Every target in \Cref{tab:learning_speed} is crossed during the learning-rate warmup, which occupies the first 750 of the 1000 steps of the budget (\Cref{tab:hparams}); the complex models cross all measured targets between steps 75 and 247, and the real models cross the $2.5$-bpc target in the same interval. A comparison confined to that interval invites a specific alternative explanation: that the ramp of the schedule, rather than the substrate, produces the separation. The schedule is identical across all four models, so this would have to be an interaction between the ramp and the substrate rather than a difference in treatment; but the interaction is not excluded by the consistency of the ratios, since the same schedule is used in all twelve runs and an interaction would be equally consistent.

The released logs settle the question directly, because they record validation loss on both sides of the warmup boundary. \Cref{fig:gaptraj} plots the validation gap, real minus complex, against optimization step, and \Cref{tab:postwarmup} reports it at the boundary and at the end of the budget. An advantage manufactured by the ramp should contract once the ramp stops.

\begin{table}[t]
\centering
\caption{\textbf{Validation gap (real $-$ complex, bpc) at the end of warmup and at the end of the budget.} A positive value means the complex model leads. The state-space gap widens after the warmup ends on both web corpora; the attention gap contracts on all three. On enwik8 the complex models are at the data limit by this point (\Cref{sec:smalldata}), which confounds the comparison there.}
\label{tab:postwarmup}
\small
\begin{tabular}{@{}lcccccc@{}}
\toprule
& \multicolumn{3}{c}{State-space pair} & \multicolumn{3}{c}{Attention pair} \\
\cmidrule(lr){2-4}\cmidrule(lr){5-7}
Corpus & step 750 & step 1000 & change & step 750 & step 1000 & change \\
\midrule
enwik8      & $+0.374$ & $+0.315$ & $-0.059$ & $-0.049$ & $-0.132$ & $-0.083$ \\
OpenWebText & $+0.321$ & $\mathbf{+0.354}$ & $\mathbf{+0.034}$ & $+0.083$ & $+0.026$ & $-0.058$ \\
FineWeb     & $+0.368$ & $\mathbf{+0.396}$ & $\mathbf{+0.028}$ & $+0.114$ & $+0.034$ & $-0.081$ \\
\bottomrule
\end{tabular}
\end{table}

\begin{figure}[t]
\centering
\includegraphics[width=\linewidth]{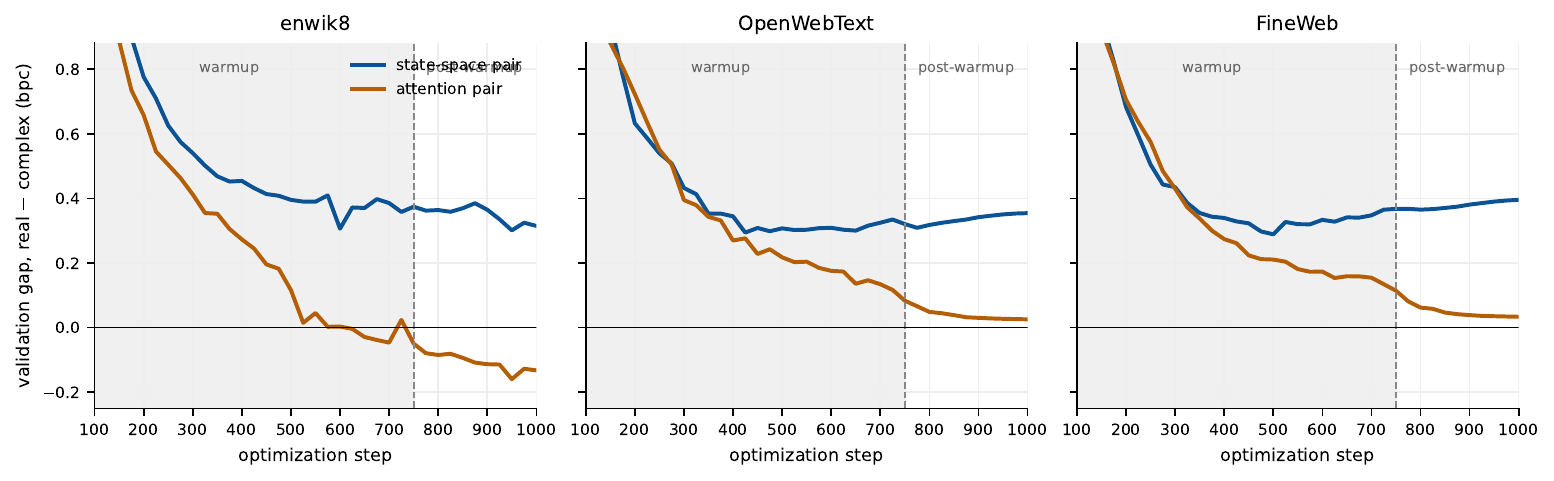}
\caption{\textbf{Validation gap against optimization step, on both sides of the warmup boundary.} Shaded region: the 750-step learning-rate warmup; dashed line: its end. Positive values indicate the complex model leads. The state-space gap (blue) is flat to increasing after the warmup ends on the two web corpora; the attention gap (orange) decays monotonically throughout on all three corpora. On enwik8 the complex models reach the data limit within the budget, so its late behavior reflects overfitting (\Cref{sec:smalldata}) rather than the schedule.}
\label{fig:gaptraj}
\end{figure}

The two pairs behave differently, and we report the difference as part of the result. For the state-space pair, the gap increases after the warmup ends on both web corpora, from $+0.321$ to $+0.354$ bpc on OpenWebText and from $+0.368$ to $+0.396$ on FineWeb. This is the opposite of what an artifact of the ramp would produce, and it is observed on precisely the corpora where the data limit is not reached within the budget; the one corpus where the gap contracts, enwik8, is the one where the complex models have exhausted a 100~MB training split (\Cref{sec:smalldata}). {\color{black}We therefore regard the warmup explanation as inconsistent with the state-space measurements, within the budget examined. Two limits of this evidence should be stated with it. The interval after the boundary is short, 250 steps of the 1000-step budget, though the trend is monotone across the twelve validation evaluations it contains; and it is not a constant learning rate but the cosine decay phase, so what the measurement excludes is an artifact of the \emph{ramp}, not every possible interaction between the schedule and the substrate. A run at constant learning rate remains the control that would exclude the wider class, and we identify it as such in \Cref{sec:limitations}.}

For the attention pair the picture is different: the gap contracts on all three corpora, on both sides of the boundary, reaching $+0.026$ and $+0.034$ bpc at step 1000 on the two web corpora, and extrapolation of the trend would place it near zero shortly beyond the budget. The attention advantage measured in \Cref{tab:learning_speed} is therefore concentrated in early training, and we do not claim that it persists. This asymmetry is consistent with the mechanism decomposition of \Cref{sec:mechanism}: the conditioning mechanism, which the state-space pair exercises, addresses a structural attenuation that is present at every step, whereas the score mechanism operates against a real attention baseline that is already well conditioned and that closes the gap as training proceeds.

}\subsection{Terminal performance}
\label{sec:finalbpc}
\label{sec:scaledep}
\label{sec:smalldata}
Faster convergence is informative only if it is not obtained at the expense of the converged loss. \Cref{tab:bpc} reports validation loss at the common 1000-step budget. H-Mamba improves on the real Mamba on all three corpora, by $0.315$ to $0.396$ bpc, equivalently a $20$ to $24\%$ reduction in per-byte perplexity. H-Transformer improves on the real Transformer on both web corpora, by $0.026$ and $0.034$ bpc, and trails it on enwik8 for the reason given above. The per-step advantage of \Cref{tab:learning_speed} is the principal finding; {\color{black}the present table indicates that, within the measured budget and in the regime where data is not the binding constraint, {\color{black}the acceleration is not obtained at the expense of the converged loss. These step-1000 values are measured 250 steps after the warmup ends, and \Cref{tab:postwarmup} shows that the state-space margins are still growing at that point while the attention margins are contracting.}

\begin{table}[t]
\centering
\caption{\textbf{Validation bpc at the 1000-step common budget} (lower is better). Daggers mark the complex realizations; the FineWeb real-Mamba entry is its step-1000 evaluation, the last its log records. On enwik8 the complex realizations overfit beyond the budget (\Cref{sec:smalldata}).}
\label{tab:bpc}
\small
\begin{tabular}{@{}lccc@{}}
\toprule
Model & enwik8 & OpenWebText & FineWeb \\
\midrule
Mamba & 1.819 & 1.820 & 1.892 \\
H-Mamba$^\dagger$ & \textbf{1.504} & \textbf{1.466} & \textbf{1.496} \\
\midrule
Transformer & \textbf{1.384} & 1.522 & 1.568 \\
H-Transformer$^\dagger$ & 1.516 & \textbf{1.496} & \textbf{1.534} \\
\bottomrule
\end{tabular}
\end{table}

\paragraph{Dependence on corpus size.}
The terminal margin of H-Mamba over the real Mamba increases monotonically with corpus size: $0.315$ bpc on enwik8, $0.354$ on OpenWebText, and $0.396$ on FineWeb. The attention margin follows the same ordering, from negative on enwik8 to $0.026$ on OpenWebText and $0.034$ on FineWeb. Were the substrate's benefit a fixed representational offset, one would expect the margins to contract on larger corpora as the models approach the achievable loss; the observed widening is in the opposite direction. {\color{black}A mechanism for the widening is not established by this paper: the vocabulary-level floor of \Cref{prop:gap} is inactive here (\Cref{sec:floor}), so it cannot supply one, and {\color{black}the reading we consider most natural, that more diverse data exercises more of the phase capacity the recurrence transports, is a hypothesis. The ordering is nonetheless informative for a different question: the step-1000 margin is smallest on the corpus where the complex models have exhausted the data and largest on the largest corpus, which is the ordering expected if the contraction of the enwik8 margin reflects the data limit and not the schedule (\Cref{sec:postwarmup}).} Because each margin is a single run and the increments lie within what a second seed could perturb, the monotone widening itself should also be read as directional rather than established.}

\paragraph{Behavior at the data limit.}
On enwik8 the complex models lead at every point of the common budget, reach their minimum near step 1000, and subsequently overfit; the still-descending real Mamba overtakes them late in the extended budget. This behavior is not in tension with the central result but is a corollary of it. A model that extracts more from each step also reaches, in fewer steps, the point at which a finite corpus ceases to constrain it, and 100~MB affords a 253M-parameter model few enough distinct contexts that this point falls within the budget. The web corpora, larger by factors of 30 and 150, contain no such point within the budget, and the complex models improve monotonically throughout. Two practical consequences follow. At small data scales, the sample efficiency of the substrate translates into an earlier need for regularization or early stopping, the standard treatments; it does not call for a modification of the substrate itself. For evaluation methodology, a fixed large budget on a small corpus penalizes the fastest learner for exhausting the data first, so comparisons between substrates should be made at matched loss or matched data in addition to matched steps. The training curves (\Cref{app:trainloss}) coincide with the validation curves up to the point of overfitting, {\color{black}which is consistent with the advantage on the web corpora reflecting learning rather than memorization.}

\subsection{Throughput and gradient propagation}
\label{sec:throughput}
\label{sec:gradstab}
Optimization steps are one of the two factors in wall-clock training time, the other being the cost of a step, and we report measured throughput while distinguishing the fair comparisons from the unfair. The fused Born-attention kernel of H-Transformer processes approximately $80{,}000$ tokens per second, against approximately $160{,}000$ for the real Transformer. Both employ fused attention kernels (cf.~\citealp{dao2022flashattention}), so the comparison is fair, and the factor of two in per-step cost offsets the factor of two in per-step convergence: at present, the two attention models reach a given loss in comparable wall-clock time, and the net value of the substrate for that pair lies in the data consumed rather than in the time elapsed. The diagonal scan of H-Mamba processes approximately $48{,}000$ tokens per second; the reference implementation of the real Mamba, which evaluates its scan without kernel fusion, processes approximately $7{,}000$. We do not report that ratio as a speedup, since it reflects the absence of an optimized baseline kernel rather than a property of the substrate. Combining step counts with measured throughput yields elapsed-time estimates: on OpenWebText at the $2.0$ bpc target, approximately $0.8$ hours of training for H-Transformer (219 steps at $80{,}000$ tokens per second) against approximately $0.9$ hours for the real Transformer (484 steps at $160{,}000$), consistent with the parity stated above; the corresponding state-space estimate awaits an optimized baseline kernel. The per-step result of \Cref{sec:perstep} is independent of these considerations, as it counts optimization steps under a common protocol and is unaffected by kernel throughput. A per-step arithmetic accounting is given in \Cref{app:flops}.

{\color{black}The gradient measurement anticipated in \Cref{sec:evolution}, and consistent with \Cref{lem:phase}, provides supporting evidence for the conditioning mechanism. (Global-norm clipping rescales all layers by a common factor and therefore does not affect the per-layer \emph{spread} analyzed here.)} \Cref{fig:grad_norm} reports per-layer gradient norms over training for H-Mamba and the real Mamba. Across the 24 layers of the real Mamba, the per-layer gradient norm spreads by more than an order of magnitude, the early layers receiving one to two orders of magnitude less signal than the late layers, which is the signature of geometric attenuation over the 1024-step sequence. Across the same 24 layers of H-Mamba the spread is a factor of approximately two. {\color{black}This pattern is what the decoupling account predicts when trained carrier channels hold their gates near one (\Cref{rem:lemscope}); it is a supporting diagnostic for the conditioning mechanism, not direct proof of long-range gradient transmission.} {\color{black}Gradient conditioning and the score's interaction rule are nonetheless distinct mechanisms that act jointly in the attention realization,} and their clean separation requires the factorial ablation specified in \Cref{sec:mechanism}; {\color{black}the present measurement is consistent with the retained mechanism being operative, and does not indicate that it acts alone.}

\begin{figure}[t]
\centering
\includegraphics[width=0.82\linewidth]{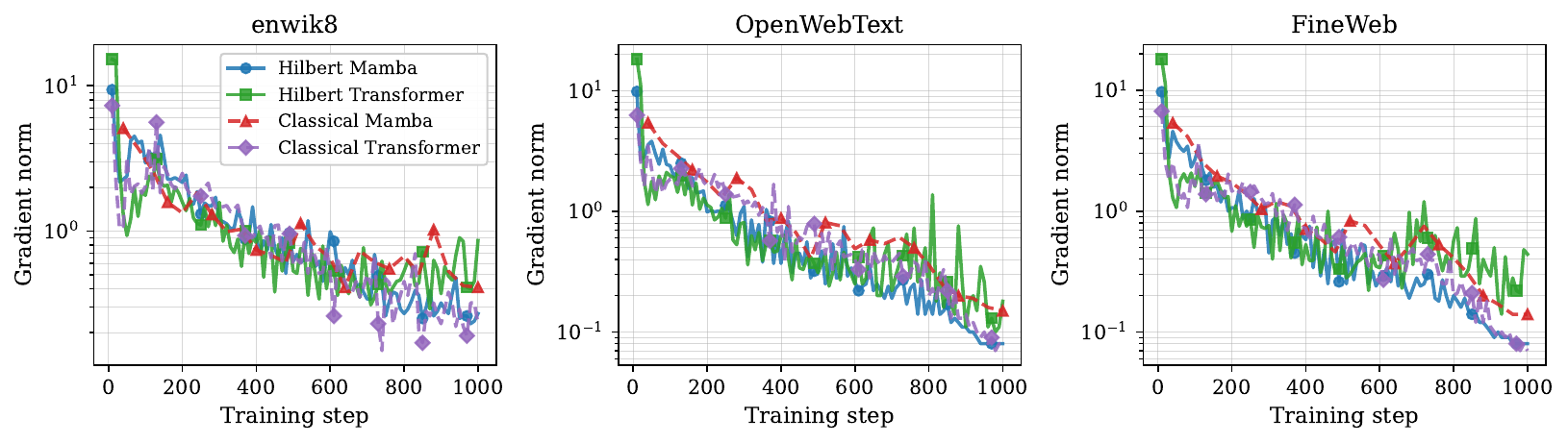}
\caption{\textbf{Per-layer gradient norm over training} for H-Mamba and the real Mamba. H-Mamba maintains all 24 layers within a factor of approximately two of one another, whereas the real Mamba exhibits a spread of more than an order of magnitude between its early and late layers, consistent with the geometric attenuation of a contractive real recurrence over the 1024-step sequence.}
\label{fig:grad_norm}
\end{figure}
\section{Discussion}
\label{sec:discussion}
{\color{black}The experiments show that a single substitution, the deployable complex substrate in place of the real one, is accompanied by a reduction in the number of steps to every measured target, by a factor of approximately three for the state-space backbone and two for the attention backbone, across corpora spanning a $150\times$ range of size.} We examine the locus of the effect (\Cref{sec:universality}), the decomposition of the mechanisms that plausibly produce it (\Cref{sec:mechanism}), its consequences for training cost (\Cref{sec:economics}), its expected behavior under the scaling variables held fixed here (\Cref{sec:scaling}), and its relation to the idealized substrate (\Cref{sec:pathforward}).

\subsection{The locus of the effect}
\label{sec:universality}
The two backbones are dissimilar in precisely the respect that matters for an optimization claim. The state-space backbone maintains a fixed-dimensional summary of the past and updates it multiplicatively, so a gradient reaching an early position must traverse the entire recurrence. The attention backbone recomputes a weighted combination of all past positions at every step, so an early position is reached directly, in one hop. Any explanation located in the routing, whether in how gradients thread the recurrence or in what attention can express, applies to one backbone and not the other. An explanation that survives the swap must live in what the two models share, and after the parameter matching of \Cref{sec:realizations} they share exactly one non-standard component: the substrate.

The substrate contributes two properties that are indifferent to routing. {\color{black}The first is phase transport: the argument of a state coordinate moves through time without distortion, whether the path is a 1024-step recurrence or an attention sum, because the Cayley phase has unit modulus by construction; the strength at which it is observed remains scaled by the accumulated magnitudes (\Cref{rem:lemscope}).} The measurements of \Cref{sec:gradstab} show this property operating in the backbone where its absence is most damaging. {\color{black}The second is interference: the scoring operation can combine two present components with opposite signs, at a strength set by their magnitudes, which enlarges the function class of the score without a dedicated gating module (the cancellation mechanism of \Cref{sec:readout}).} Neither property refers to the manner in which positions are connected. {\color{black}This is the sense in which substrate and backbone are close to orthogonal axes of design, and the stable, backbone-characteristic ratios of three and two are what one would expect when a common cause meets two different amounts of headroom. The independence claim requires one qualification, which \Cref{sec:postwarmup} supplies: the early-phase acceleration appears in both backbones, whereas an advantage that persists past the warmup and continues to widen is observed, within this budget, only for the state-space backbone. What transfers across backbones is therefore the early effect; the persistence of the effect is established here only where the conditioning mechanism operates.}

Two experiments would test the attribution directly. If the effect is substrate-borne, a third backbone given the same complex state and Born score, for instance a long-convolution or gated linear-attention model, should accelerate by its own stable factor, whereas a backbone-specific account predicts no acceleration there. If phase transport is the operative property for recurrent backbones, the state-space ratio should grow with sequence length, since the attenuation $\rho^{T}$ of the real baseline worsens geometrically in $T$ while the phase channel does not. Neither experiment requires new theory, and either could falsify the substrate attribution.

\subsection{Decomposition of the mechanism}
\label{sec:mechanism}
\label{sec:ablation}
Two mechanisms could produce the acceleration, and our experiments separate them only partially. {\color{black}The first is the interaction rule of the score: the Born score is a quadratic interaction with signed cross terms, and, by the attention-level analogy of \Cref{sec:floor} (the vocabulary-level floor itself is inactive here), a richer score can escape rank-type ceilings that a $D=16$-dimensional real dot product faces when ranking $1024$ positions.} The second is optimization conditioning: the recurrence transports gradients through the phase (\Cref{lem:phase}, \Cref{sec:gradstab}) and supplies interference to the state dynamics. The pattern of results assigns the mechanisms at two of the four corners of the design. {\color{black}The vocabulary readout of H-Mamba is a tied softmax identical to its baseline's, and it uses no Born score, so no readout mechanism is available to it; its threefold acceleration is accordingly attributed to conditioning. The Born score of H-Transformer is a genuinely richer interaction rule, so the score mechanism is available to it in addition to conditioning.}

Surprisingly, the backbone in which both mechanisms are available gains less: a factor of two rather than three. The resolution lies in the headroom available to each baseline. The real Transformer is already well conditioned; its gradient paths are one hop long, and no attenuation of the form $\rho^{T}\approx 3.4\times 10^{-5}$ afflicts it, so the conditioning mechanism, the larger of the two, has little left to repair. The real Mamba, whose contractive recurrence attenuates propagated signal by factors of order $10^{-5}$ over its 1024-step sequences (\Cref{sec:evolution}), gives the conditioning mechanism the most room to operate. {\color{black}This account yields the prediction noted in \Cref{sec:universality}: the state-space ratio should grow with sequence length while the attention ratio should not, since only the former baseline deteriorates with $T$. It also anticipates the post-warmup trajectories of \Cref{sec:postwarmup}. A conditioning defect of the real recurrence is present at every step of training, so an advantage that repairs it should not expire when the schedule stops ramping, which is what the widening state-space gap shows; a score-side advantage over a baseline that is already well conditioned has no comparable structural deficit to keep exploiting, and the contracting attention gap is what that predicts. We note this as a consistency between the mechanism decomposition and an independent measurement, not as a confirmation of the decomposition, which the factorial cells below would test.}

What remains unresolved is the interaction of the two mechanisms, and the experiment that resolves it is a factorial design over two binary factors: the field of the recurrence, real or complex, and the score, affine--softmax or Born. Two of the four cells are the models of this paper. The two missing cells are a complex recurrence with a purely softmax pipeline, which isolates conditioning, and a real recurrence with a Born score, which isolates rank; comparing each missing cell with the real baseline measures one mechanism alone, and the amount by which the full substrate exceeds the sum of the two measures their interaction. A complementary control requires no complex arithmetic at all: a mixture-of-softmaxes head~\citep{yang2018breaking} raises the rank of a real readout, and if it closed the gap on the state-space backbone the readout would be implicated after all, although the fact that H-Mamba attains its acceleration with an ordinary softmax head already argues otherwise. We specify these cells in detail because they are the most informative experiments the study leaves undone; the available compute did not permit them, and \Cref{sec:limitations} records this as the principal open question of the paper.

\subsection{Steps, tokens, and wall-clock time}
\label{sec:economics}
At a fixed batch size, a reduction in optimization steps is an equal reduction in the tokens consumed. Each step processes $1{,}048{,}576$ tokens (\Cref{tab:hparams}), so reaching $2.0$ bpc on FineWeb costs H-Mamba approximately $0.23$B tokens against approximately $0.79$B for the real Mamba (\Cref{tab:absolute}), a saving of roughly $0.57$B tokens for that single target. This is the most direct value of the substrate, and it accrues where training is data-bound rather than compute-bound: specialized domains, private corpora, low-resource languages, and byte-level settings in which the corpus, and not the accelerator budget, is the scarce resource. The overfitting of \Cref{sec:smalldata} is the same property observed at the opposite extreme: on a 100~MB corpus, reaching a given loss on one third of the data means exhausting the data three times sooner.

Wall-clock time is governed by the product of the step count and the step cost, and the two pairs stand differently in this respect. For the attention pair, the measured throughputs ($80{,}000$ against $160{,}000$ tokens per second) exactly offset the step ratio of two, so the substrate at present purchases data efficiency at wall-clock parity. For the state-space pair the comparison requires an optimized real baseline kernel, which does not exist in our setup; a comparison against the unfused reference scan would overstate the substrate's contribution, and we do not make it. The break-even condition is nonetheless fixed by the step ratio alone. With a step ratio of three, the complex scan at $48{,}000$ tokens per second reaches any fixed target sooner in wall-clock time than any real kernel running below approximately $144{,}000$ tokens per second at this configuration. Whether an optimized real scan exceeds that figure is an engineering question, not a scientific one; the accounting of \Cref{app:flops} indicates that the complex kernel, at present bound by memory bandwidth, not arithmetic, has unexploited headroom of its own. Until such kernels exist, the established value of the substrate is in steps and tokens, and its value in hours remains conditional.

\subsection{Behavior under scaling}
\label{sec:scaling}
Three quantities held fixed in this study govern the extrapolation of the result: the vocabulary size, the state width, and the model scale. The vocabulary is the 256 byte values, which happens to equal $N^{2}=16^{2}$. The coincidence would matter for the idealized Born vocabulary readout, whose rank advantage binds when $N^{2}\ge V$; it is immaterial to the models actually trained, whose vocabulary head is a tied softmax that scales as any language-model head does. Because the measured advantage originates in the recurrence and the attention score, it carries no substrate-specific dependence on $V$, and we expect it to persist at subword vocabularies. This expectation is untested, and verifying it is the first experiment we would undertake. The component that scales unfavorably is the idealized readout: at vocabularies of 30k to 128k tokens it demands a state width of order $\sqrt{V}$ and a readout cost of $O(NV)$, or else the surrender of the exact rank advantage to a softmax.

The state width $N=16$ is narrow. The expressivity of the substrate and the cost of the scan grow linearly in $N$, while the rank ceiling of the idealized readout grows as $N^{2}$; whether the per-step advantage grows, saturates, or shrinks with $N$ is not determined here and bears directly on the value of the substrate at scale. The model scale is fixed at 253M parameters and the budget at 1000 to 1500 steps. The regime in which the idealized readout becomes decisive, a vocabulary far exceeding $N^{2}$, as with a 50k subword vocabulary against a few hundred state dimensions, is precisely the regime this study cannot reach. The scaling of the effect in $N$, $V$, and parameter count, and the point at which restoring the idealized readout becomes worthwhile, are the principal quantities left undetermined; settling them requires experiments at the several-hundred-million to billion-parameter scale, for which the established scaling methodology applies directly~\citep{kaplan2020scaling,hoffmann2022training}.

\subsection{Relation to the idealized substrate}
\label{sec:pathforward}
\label{sec:designaxis}
\label{sec:designspace}
The substrate evaluated here is deliberately the conservative member of a family, and the family has three axes. The first is the generator of the evolution, ranging from the diagonal phase used here, through block-diagonal and banded Hermitian generators that couple groups of coordinates, to the dense input-dependent Hamiltonian of the idealized model; the diagonal choice obtains the $O(N)$ scan at the price of inter-coordinate coupling within the recurrence, which the readout must then supply. The second is the norm behavior of the recurrence, ranging from the contractive gate used here to exact unitarity; the gate obtains selective forgetting at the price of the exact gradient guarantee. The third is the placement of the Born rule, ranging from the attention score used here to the vocabulary readout; the attention placement obtains a cost independent of vocabulary size at the price of the $\Omega(N^{2})$ readout advantage. On all three axes we occupy the vertex nearest a production system, chosen so that the substrate could be hosted by an existing backbone and trained without incident.

The relaxations define the remaining experiments, which are the specific ones the theory of \citet{nebli2026quantum} designates rather than generic future work. Restoring exact unitarity, together with a forgetting mechanism compatible with it, such as unitary dynamics on an extended state with explicit sink coordinates, would test whether norm preservation adds anything beyond the phase transport that the relaxed recurrence already provides; the measurements of \Cref{sec:gradstab} suggest the residual is small, and a null result would itself be informative, licensing the cheaper contractive form. Moving the Born rule to the vocabulary readout at subword scale would test whether the $\Omega(N^{2})$ rank advantage materializes as a training advantage where the rank ceiling actually binds. Since the conservative vertex already yields a stable factor of two to three in optimization steps, a systematic traversal of this space, and the identification of the vertex that is optimal at a given scale, appears at least as well posed a research program as a further iteration on the backbone, whose axis is by now densely explored while the substrate axis remains nearly untouched.

\section{Limitations}
\label{sec:limitations}
Several limitations bound the conclusions, and we state each together with the experiment that would remove it. Every cell of the experimental design is a single training run, so no error bars are available, and we calibrate what seed variance could plausibly explain against the smallest effects we report. The attention margins on the web corpora, $0.026$ to $0.034$ bpc, and the monotone widening of the margin with corpus size are effects we ourselves treat as potentially seed-level; they should be read as directional until replicated. The state-space margins, $0.315$ to $0.396$ bpc, are an order of magnitude larger than that self-declared noise scale, and the corresponding step ratios of approximately three recur across nine independent corpus-target cells (\Cref{tab:learning_speed}); we therefore regard their direction as established. Three runs per cell on a single corpus would settle both questions. {\color{black}{\color{black}Four further measurement caveats apply. Most target crossings occur during or near the 750-step learning-rate warmup. The complex models cross every measured target inside it, between steps 75 and 247, and the real models cross the 2.5-bpc target there as well. The step ratios of \Cref{tab:learning_speed} are therefore measured in a regime where the schedule is nonstationary, though identical across models. \Cref{sec:postwarmup} addresses this for the state-space pair, whose advantage widens after the warmup ends on both web corpora, which an artifact of the ramp would not do; it does not address it for the attention pair, whose advantage contracts throughout and which we accordingly describe as an early-training effect of untested persistence. A run at constant learning rate, with no warmup and no decay, would settle the question for both pairs directly, and is the control we would add first.} Crossing steps are obtained by linear interpolation between validation evaluations spaced 20 steps apart, which bounds their resolution. The paired models are parameter-matched but not functionally identical: the real Mamba carries a parameter-matching SwiGLU branch, and the attention pair differs in the functional form of the score, including its normalization. And the per-layer gradient plots of \Cref{sec:gradstab} are supporting diagnostics for the conditioning mechanism, not direct measurements of long-range gradient transport.} All models, moreover, share one size, 253M parameters, and one budget regime, 1000 to 1500 steps, which is far from convergence on the web corpora. The result is therefore a statement about the measured range of the loss, and the persistence of the per-step advantage under budgets and models one to two orders of magnitude larger is the most consequential question the study leaves open; extending a single pair on one web corpus by an order of magnitude in steps would answer it directly.

Two further limitations concern interpretation, not measurement. The deployable substrate relaxes both properties on which the representational theory of \citet{nebli2026quantum} rests, so the $\Omega(N^{2})$ separation describes the idealized model and not the models trained here, and \Cref{prop:gap} bounds the readout in isolation, under an assumption on the probabilities traversed during training, without accounting for the magnitude of the full effect. The two mechanisms that plausibly produce the acceleration, {\color{black}the conditioning of the optimization by the recurrence and the richer interaction rule of the Born score, are likewise confounded in the attention model, which possesses both;} the factorial cells that would separate them, a complex recurrence with a softmax score, a real recurrence with a Born score, and the mixture-of-softmaxes control, were not run for want of compute (\Cref{sec:mechanism}). Finally, the throughput comparison for the state-space backbone is uninformative, because the real baseline runs an unfused reference scan, and all wall-clock figures are specific to A100-80GB hardware in half precision; an optimized real-Mamba kernel at matched configuration would allow the break-even condition of \Cref{sec:economics} to be tested rather than stated.

A third group of limitations concerns cost, engineering, and evaluation scope. Complex arithmetic carries a constant-factor overhead, and current accelerators provide no native complex tensor-core path, so every complex operation decomposes into real ones; the fused Born-attention kernel consequently runs at half the throughput of its real counterpart, and the step advantage of the attention pair does not yet translate into wall-clock savings. The implementation depends on two nonstandard components for numerical stability, the clamped log-domain scan of \Cref{sec:scan} and the single-precision accumulation of the fused kernel, without which \texttt{bf16} training of the substrate is unstable in our experience; adopting the substrate therefore carries a real, if modest, engineering cost. The training protocol, while identical across models, was not re-tuned per model, and a per-model learning-rate sweep could move the reported ratios in either direction; such a sweep is inexpensive relative to the factorial ablation and is worth performing alongside it. The theoretical account is partial in a specific sense: {\color{black}\Cref{prop:gap} is inactive at the vocabulary scale used here and \Cref{lem:phase} concerns an isolated component, so together they indicate the direction of each mechanism rather than accounting for the measured magnitudes; a convergence-rate or conditioning analysis} of the joint recurrence-plus-readout system, for instance through the Fisher information, is not attempted. Lastly, the evaluation is confined to language-modeling loss; calibration, robustness, out-of-distribution behavior, and analyses of the learned representations, including the distribution of learned phases and the singular-value spectra of the states, are natural companion studies that our results do not address.

\section{Conclusion}
\label{sec:conclusion}
\citet{nebli2026quantum} established, for an idealized model class on synthetic tasks, that a complex substrate is representationally more powerful than a real one. This paper establishes the corresponding statement about training. {\color{black}A deployable form of the substrate, contractive where the idealized model is unitary and with the Born rule confined to the attention score, reaches every measured validation loss in approximately one third of the optimization steps of the real Mamba and one half of those of the real Transformer, from the first hundred steps of training, across corpora spanning a $150\times$ range of size, at matched parameter counts. The two backbones differ in what happens next: the state-space advantage continues to widen after the learning-rate warmup ends, while the attention advantage contracts toward zero, so the early acceleration is common to both backbones and its persistence is established here only for the recurrence.} Stated compactly: replacing the number system of the state and the form of the score, while leaving the routing mechanism untouched, reduced the optimization steps of two structurally unrelated models by factors of two to three, and {\color{black}the indifference of the effect to the routing supports attributing it to the substrate.} The relaxations that made the substrate deployable define the experiments that remain: restoring exact unitarity, and returning the Born rule to the vocabulary readout at the scale where its rank advantage is predicted to bind. On the present evidence, the field over which a model computes is a design decision of the same standing as the mechanism by which it attends.

\bibliographystyle{unsrtnat}
\bibliography{refs}

\appendix
\section{Substrate Pseudocode}
\label{app:pseudocode}
\Cref{alg:cayley,alg:scan} implement the recurrence \eqref{eq:rec} of \Cref{sec:evolution} without sequential evaluation, and \Cref{alg:born} implements the attention score \eqref{eq:born-attn} of \Cref{sec:readout}.

\begin{algorithm}[h]
\caption{Diagonal Cayley transition coefficient}
\label{alg:cayley}
\begin{algorithmic}[1]
\REQUIRE phase parameters $\boldsymbol{\phi}\in\mathbb{R}^{N}$, gate logits $\mathbf{u}\in\mathbb{R}^{N}$
\STATE $\boldsymbol{\lambda}\gets(1+i\boldsymbol{\phi})\oslash(1-i\boldsymbol{\phi})$ \COMMENT{$|\lambda_k|=1$, $\arg\lambda_k=2\arctan\phi_k$}
\STATE $\boldsymbol{\alpha}\gets\sigma(\mathbf{u})$ \COMMENT{selective gate, $\alpha_k\in(0,1)$}
\STATE \textbf{return} $\mathbf{a}\gets\boldsymbol{\alpha}\odot\boldsymbol{\lambda}$ \COMMENT{$|a_k|=\alpha_k$}
\end{algorithmic}
\end{algorithm}

\begin{algorithm}[h]
\caption{Chunked parallel scan for $\psi_{t+1}=a_t\odot\psi_t+b_t$}
\label{alg:scan}
\begin{algorithmic}[1]
\REQUIRE initial state $x_0\in\mathbb{C}^{N}$, and $\log|\mathbf{a}|$, $\arg\mathbf{a}$, $\mathbf{b}$ over a chunk of length $C$
\STATE $\ell\gets\operatorname{clamp}(\operatorname{cumsum}(\log|\mathbf{a}|),-30,0)$;\quad $\vartheta\gets\operatorname{cumsum}(\arg\mathbf{a})$
\STATE $P\gets\operatorname{polar}(e^{\ell},\vartheta)$;\quad $P^{-1}\gets\operatorname{polar}(e^{-\ell},-\vartheta)$
\STATE \textbf{return} $P\odot\big(x_0+\operatorname{cumsum}(\mathbf{b}\odot P^{-1})\big)$ \COMMENT{final state carried to next chunk}
\end{algorithmic}
\end{algorithm}

\begin{algorithm}[h]
\caption{Causal Born attention, per head}
\label{alg:born}
\begin{algorithmic}[1]
\REQUIRE complex queries, keys, values $q,k,v$; temperatures $\tau$; head width $D$
\STATE $s_{ij}\gets|\braket{q_i}{k_j}|^{2}/(D\,\tau_i)$ for $j\le i$ \COMMENT{squared modulus, not softmax}
\STATE $\psi_i\gets\big(\textstyle\sum_{j\le i}s_{ij}v_j\big)\big/\big(\sum_{j\le i}s_{ij}\big)$ \COMMENT{causal running-sum normalization}
\STATE \textbf{return} $\psi$ \COMMENT{fused kernel, single-precision accumulation}
\end{algorithmic}
\end{algorithm}

\section{Per-Step Arithmetic Accounting}
\label{app:flops}
The diagonal Cayley recurrence needs $O(N)$ arithmetic per step per head: forming the transition coefficient and updating the state each cost about $6N$ real operations, roughly $12N$ in total, against the $2N^{2}$ floating-point operations of a dense real transition, a ratio of approximately 3 at $N=16$. This ratio is not realized in wall-clock time, for three reasons. At $N=16$ the recurrence kernel is bound by memory bandwidth, not arithmetic: the state resides in registers, and the cost is moving the projections through high-bandwidth memory. Complex arithmetic carries a constant-factor overhead in our implementation. And the projection costs are shared with the real model, so they cancel in the comparison. We therefore report measured throughput in \Cref{sec:throughput} and claim no kernel speedup for the state-space backbone, whose real baseline runs an unfused reference scan.

\section{Training Loss}
\label{app:trainloss}
\Cref{fig:train_loss} reports training loss against optimization step, the counterpart of the validation curves of \Cref{fig:val_bpc}. The two agree: the complex realizations lead on the training objective by the same per-step margins they show on validation, and the separation appears at the same early step. {\color{black}This argues against the reading that the validation acceleration is an artifact of the split, since a model that reduced training loss no faster than its counterpart would be unlikely to lead on validation by the observed margin.} Training and validation diverge only on enwik8 beyond the budget, at the overfitting point identified in \Cref{sec:smalldata}. The real Mamba is omitted from this figure because its training loop logs a sequence-summed rather than token-mean loss, which puts its training curve on an incomparable scale; its validation figures, computed by the shared evaluation path, are unaffected and appear throughout the main text.

\begin{figure}[h]
\centering
\includegraphics[width=\linewidth]{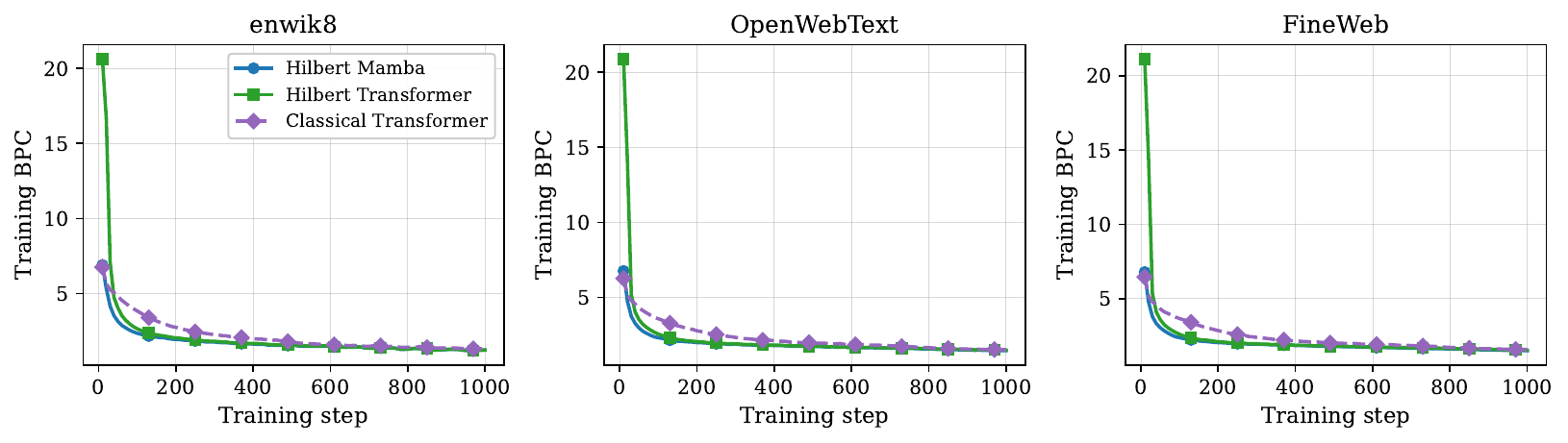}
\caption{\textbf{Training loss against optimization step} for the complex realizations and the real Transformer. The per-step ordering reproduces that of the validation curves in \Cref{fig:val_bpc}, establishing that the acceleration is a property of learning, not of the validation split.}
\label{fig:train_loss}
\end{figure}

\section{Reproducibility}
\label{app:repro}
Each corpus is read as raw bytes, partitioned $90/5/5$ into training, validation, and test, and segmented into non-overlapping windows of 1024 bytes, the target being the input advanced by one byte. Bits per character is the cross-entropy in nats divided by $\ln 2$. The training loop records the loss every 10 steps and the validation loss every 20, over 20 validation batches during training and over the full validation split at completion. All twelve training logs are released, together with the parser that produces every table and figure from them; every quantity reported in \Cref{tab:learning_speed,tab:absolute,tab:bpc} is re-derivable from the logs. The FineWeb real-Mamba log ends at its step-1000 evaluation, which is the value \Cref{tab:bpc} reports. For reference, the components a reimplementation requires are specified as follows: the state initialization and frequency grid in \Cref{sec:init}, the Cayley parameterization and gate in \Cref{sec:evolution}, the scan in \Cref{sec:scan} and \Cref{app:pseudocode}, the block structure and normalization in \Cref{sec:realizations}, the parameter matching in \Cref{sec:realizations}, and the optimizer, schedule, and precision in \Cref{tab:hparams}. The training code and the fused kernels accompany the release.

\end{document}